\def\1{\bm{1}}
\DeclareMathAlphabet{\mathsfit}{\encodingdefault}{\sfdefault}{m}{sl}
\SetMathAlphabet{\mathsfit}{bold}{\encodingdefault}{\sfdefault}{bx}{n}
\newcommand{\R}{\mathbb{R}}
\def\<{{\langle}}
\def\>{{\rangle}}
\newtheorem{assumption}{Assumption}
\newtheorem{definition}{Definition}
\newtheorem{remark}{Remark}
\newtheorem{theorem}{Theorem}
\newtheorem{lemma}{Lemma}
\newenvironment{customthm}[1]
  {\innercustomthm}
  {\endinnercustomthm}
\def\R{{\mathbb{R}}}
\def\b0{{\boldsymbol{0}}}
\newcommand{\ubar}[1]{\underaccent{\bar}{#1}}
\newcommand*{\addFileDependency}[1]{
  \typeout{(#1)}
  \@addtofilelist{#1}
  \IfFileExists{#1}{}{\typeout{No file #1.}}
}
\newcommand*{\myexternaldocument}[1]{
    \externaldocument{#1}
    \addFileDependency{#1.tex}
    \addFileDependency{#1.aux}
}
\renewcommand*{\backref}[1]{}  
\renewcommand*{\backrefalt}[4]{
  \ifcase #1 
     No cited.
  \or
     (Cited on page #2.)
  \else
     (Cited on pages #2.)
  \fi}
\begin{document}

\twocolumn[

  \aistatstitle{Neural Monge Map estimation and its applications
  }

  \aistatsauthor{Jiaojiao Fan$^*$,~Shu Liu$^*$,~Shaojun Ma,~Haomin Zhou,~Yongxin Chen
  }

  \aistatsaddress{ Georgia Institute of Technology } ]

\begin{abstract}
  Monge map refers to the optimal transport map between two probability distributions and provides a principled approach to transform one distribution to another.  Neural network based optimal transport map solver has gained great attention in recent years. Along this line, we present a scalable algorithm for computing the neural Monge map between two probability distributions. Our algorithm is based on a weak form of the optimal transport problem, thus it only requires samples from the marginals instead of their analytic expressions, and can accommodate optimal transport between two distributions with different dimensions. Our algorithm is suitable for general cost functions, compared with other existing methods for estimating Monge maps using samples, which are usually for quadratic costs. The performance of our algorithms is demonstrated through a series of experiments with both synthetic and realistic data, including text-to-image generation and image inpainting tasks.
\end{abstract}

\section{Introduction}
The past decade has witnessed great success of optimal transport (OT) \citep{villani2008optimal} based applications in machine learning community \citep{wgan,otrobotics1, otmatch, icnnot, otrobotics2, learnsde1, icnnwasserbary,otpgm,alvarez2020geometric,AlvSchMro21,bunne2021jkonet,MokKorLiBur21,bunne2022supervised,yang2018scalable,fan2022complexity}.
The Wasserstein distance induced by OT is widely used to evaluate the discrepancy between distributions thanks to its weak continuity
and robustness. In this work, given any two probability distributions $\rho_a$ and $\rho_b$ defined on $\mathbb{R}^n$ and $\mathbb{R}^m$, we consider the \textbf{Monge problem}
\begin{equation}
  C_{\textrm{Monge}}(\rho_a,\rho_b) \triangleq  \min_{\substack{T:\mathbb{R}^n\rightarrow\mathbb{R}^m,\\T_\sharp \rho_a = \rho_b}} \int_{\mathbb{R}^n} c(x,T(x))\rho_a(x)~dx.  \label{Monge problem}
\end{equation}
Here $c(x,y)$ denotes the cost of transporting from $x$ to $y$ and $T$ is the transport map. We define the pushforward of distribution $\rho_a$ by $T$ as $T_\sharp \rho_a(E) = \rho_a(T^{-1}(E))$ for any measurable set $E \subset \mathbb{R}^m$. The Monge problem seeks the cost-minimizing transport plan $T_*$ from $\rho_a$ to $\rho_b$. The optimal $T_*$ is also known as the \textbf{Monge map} of \eqref{Monge problem}.

Solving \eqref{Monge problem} in high dimensional space yields a challenging problem due to the curse of dimensionality for discretization. A modern formulation of \hyperref[Monge problem]{Monge problem} as a linear programming problem known as Kantorovich problem \citep{villani2003topics}, and adding an entropic regularization, one is capable of computing the problem via iterative Sinkhorn algorithm \citep{cuturi2013sinkhorn}. Such type of treatment has been widely accepted since it is friendly to high dimensional cases \citep{lineartimeotsinkhorn, genwithsinkhorn, otmatch, ipot}, but the algorithm does not scale well to a large number of samples and is not suitable to handle continuous probability measures.
Moreover, the transport map/plan obtained with this strategy cannot be generalized to unseen samples.

In this work, we propose a
scalable algorithm for estimating the Wasserstein distance as well as the optimal map in continuous spaces without introducing any regularization terms. Particularly, we apply the Lagrangian multiplier directly to Monge problem, and obtain a minimax problem.
\textbf{Our contribution} is summarized as follows:
1) We develop a neural network based algorithm to compute the optimal transport map associated with general transport costs between any two distributions given their \textit{unpaired} samples;
2) Our method is capable of computing OT problems between distributions over spaces that do not share the same dimension.
3) We provide a rigorous error analysis of the algorithm based on duality gaps; 4) We demonstrate its performance and its scaling properties in truly high dimensional setting through various experiments.

As other computational OT methods, our method does not require paired data for learning the transport map.
In real world,
the acquirement and maintenance of large-scale paired data are laborious. We will show that our proposed algorithm can be applied to multiple cutting-edge tasks, where the dominant methods still require paired data. In particular, our examples include text-to-image generation and image inpainting, which would confirm that our algorithm can achieve
competitive results even without paired data.

\begin{figure*}[ht!]
  \includegraphics[width=1\linewidth]{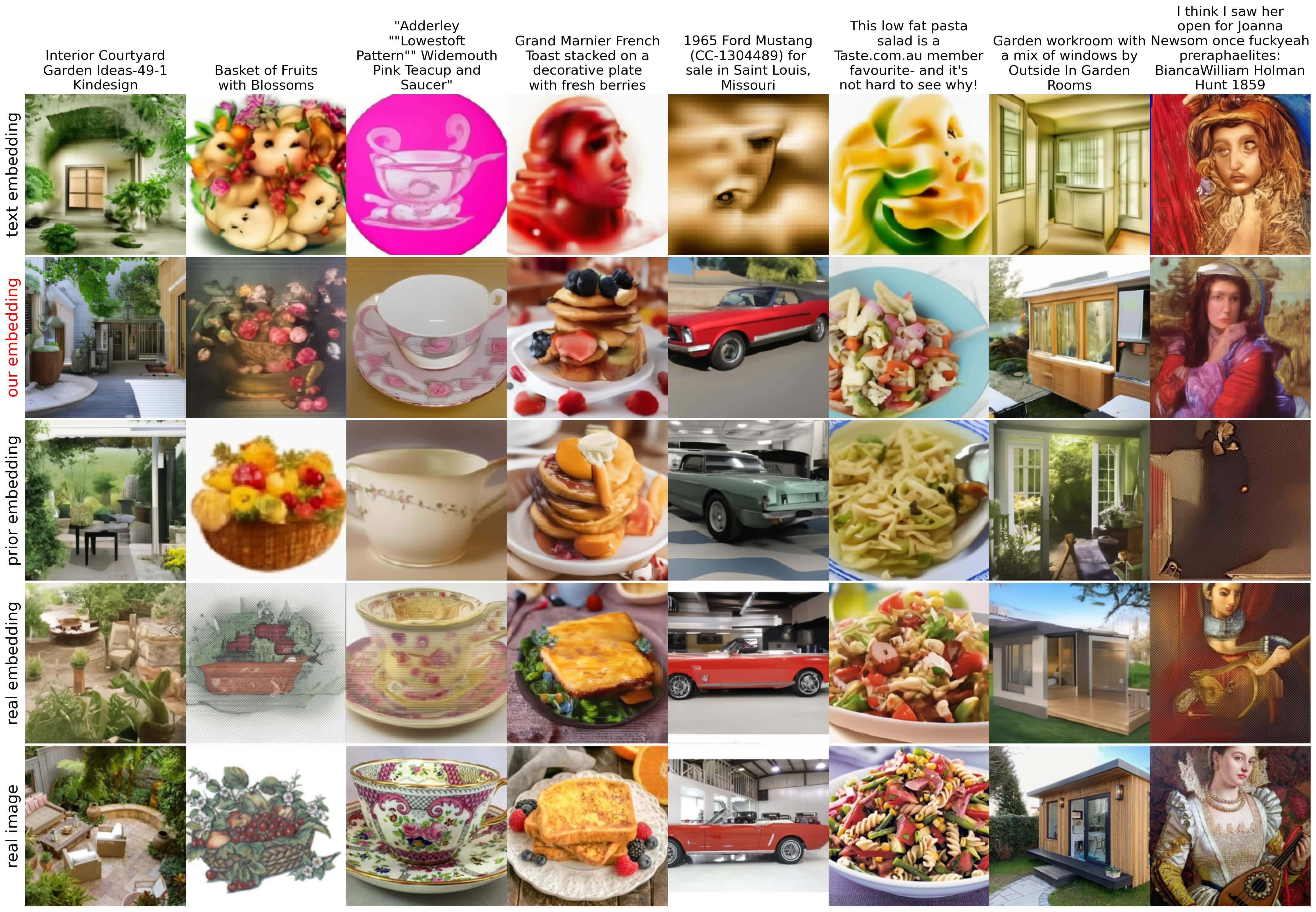}
  \caption{Random image samples with size $224\times224$ on Laion art testing prompts. Except for the last row of real images, we show the outputs of the same decoder with different conditions. The condition of decoder includes a text encoding, which we keep unchanged across rows, and an image embedding.
    In the first row, we feed the decoder the text embedding, and the generated images are unrealistic because the image and the text embeddings are not interchangeable. In the second row, we pass the embedding pushforwarded by our transport map, which is trained on unpaired data. As a baseline method, we pass an image embedding generated by the diffusion prior in DALL$\cdot$E2-Laion, which is trained on paired data.
    To explicitly show the decoder's recovery ability, we pass the real image embedding in the fourth row.}
  \label{fig:laion_art_prior}
\end{figure*}

\begin{figure*}[ht!]
  \includegraphics[width=1\linewidth]{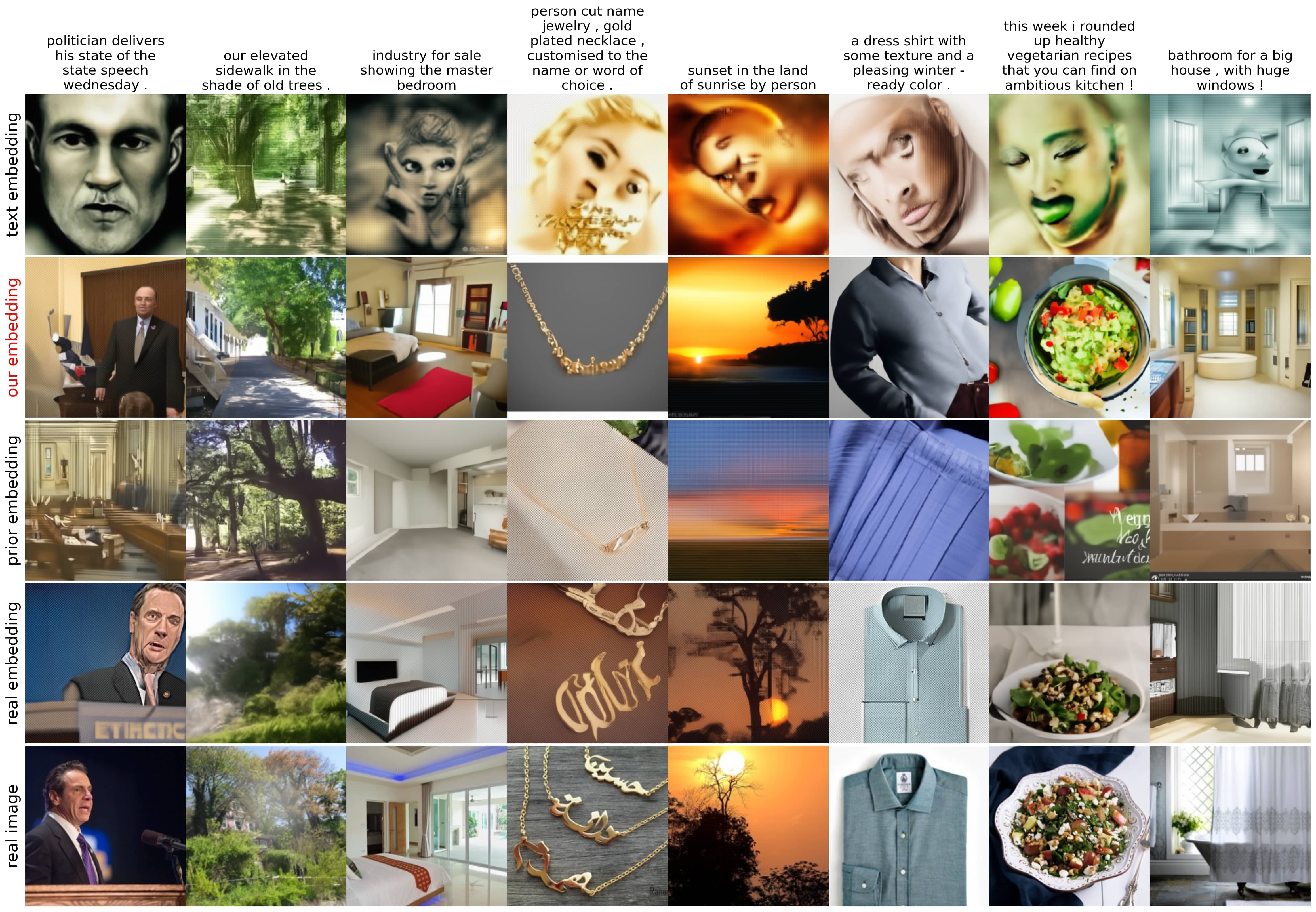}
  \caption{Random image samples on Conceptual Captions 3M prompts. The meaning of each row is the same as in Figure \ref{fig:laion_art_prior}.}
  \label{fig:cc3m_prior}
  \vspace{-0.3cm}
\end{figure*}

\section{A brief background of OT problems}
The general OT problem from $\mathbb{R}^n$ to $\mathbb{R}^m$ is  formulated as
\begin{equation}
  C(\rho_a, \rho_b) :=  \inf_{\pi \in \Pi(\rho_a,\rho_b) }
  \int_{\mathbb{R}^n\times \mathbb{R}^m}c(x,y)d\pi (x,y)
  , \label{ g OT}
\end{equation}
where we define $\Pi(\rho_a,\rho_b)$ as the set of joint distributions on $\mathbb{R}^{n \times m}$ with marginals equal to $\rho_a$ and $\rho_b$.
In this work, we will mainly focus on the cost function $c(\cdot,\cdot)$ that is well-defined and regular. The detailed assumptions of our considered cost functions are presented in
\eqref{condition c low bdd}, \eqref{condition c a} and \eqref{condition c b}
in
Appendix \ref{background}.

The Kantorovich dual formulation of the primal OT problem \eqref{ g OT} is (Chap 5, \cite{villani2008optimal})
\begin{equation}
  \sup_{\phi, \psi}\left\{\int_{\mathbb{R}^m} \phi(y)\rho_b(y)~dy - \int_{\mathbb{R}^n} \psi(x)\rho_a(x)~dx \right\}, \label{Kantorovich formula 1 }
\end{equation}
where the maximization is over all $\psi\in L^1(\rho_a)$, $\phi\in L^1(\rho_b)$ that satisfy $\phi(y)-\psi(x)\leq c(x,y)$ for any $x\in\mathbb{R}^n, y\in \mathbb{R}^m$.

For further discussions on the dual problem \eqref{Kantorovich formula 1 }, as well as the equivalence between the primal OT problem \eqref{ g OT} and its Kantorovich dual \eqref{Kantorovich formula 1 }, we refer the reader to Appendix \ref{background}.

In general, the optimal solution $\pi_*$
of \eqref{ g OT} can be treated as a random transport plan, i.e., we are allowed to break a single particle into pieces and then transport each piece to certain positions according to the plan $\pi_*$. However, in this study, we will mainly focus on computing for the deterministic optimal map $T_*$ of the classical version of the OT problem, which is the \hyperref[Monge problem]{Monge problem}.

The following theorem states the existence and uniqueness of the optimal solution to the Monge problem. It also reveals the relation between the optimal map $T_*$ and the optimal transport plan $\pi_*$. A more complete statement of this theorem can be found in Appendix \ref{background}.
\begin{theorem}%
  \label{thm monge}
  (Informal)
  Suppose the cost $c$ satisfies \eqref{condition c low bdd}, \eqref{condition c a},
  \eqref{condition c b}, and assume that $\rho_a$ and $\rho_b$ are compactly supported and $\rho_a$ is absolute continuous with respect to the Lebesgue measure on $\mathbb{R}^n$. Then there exists a unique-in-law\footnote{This means if $T_{**}$ is another Monge map solving the problem \eqref{Monge problem}, then $T_*=T_{**}$ on $\textrm{Spt}(\rho_a)\textbackslash  E_0$, where $\textrm{Spt}(\rho_a)$ is the support of $\rho_a$ and $E_0$ is a zero measure set.} transport map $T_*$ solving Monge problem \eqref{Monge problem}. Furthermore, $(\textrm{Id}, T_*)_\sharp \rho_a$ is an optimal solution to the OT problem \eqref{ g OT}.
\end{theorem}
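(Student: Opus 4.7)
The plan is to follow the classical Gangbo--McCann strategy, which generalizes Brenier's theorem from the quadratic cost to any cost satisfying the regularity and twist hypotheses \eqref{condition c low bdd}, \eqref{condition c a}, \eqref{condition c b}. The argument has three logical stages: (i) obtain an optimal Kantorovich plan and a pair of dual potentials, (ii) use the twist condition to show this plan is concentrated on the graph of a map, and (iii) derive uniqueness from the a.e.\ determination of that map by $\nabla \psi_*$.

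For step (i), I would first note that since $\rho_a,\rho_b$ are compactly supported and $c$ is continuous and bounded below on the product of the supports, the Kantorovich problem \eqref{ g OT} admits a minimizer $\pi_*$: the set $\Pi(\rho_a,\rho_b)$ is tight, hence weakly compact by Prokhorov's theorem, and $\pi\mapsto\int c\,d\pi$ is weakly lower semicontinuous. Kantorovich duality (stated in the appendix and recalled in \eqref{Kantorovich formula 1 }) then supplies a maximizing pair $(\psi_*,\phi_*)$ which we may take to satisfy the $c$-conjugacy relations $\psi_*=\phi_*^c$ and $\phi_*=\psi_*^{\bar c}$. These potentials are Lipschitz on the (compact) supports by assumption \eqref{condition c a}, and $\phi_*(y)-\psi_*(x)=c(x,y)$ holds on $\mathrm{Spt}(\pi_*)$.

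For step (ii), I would fix a point $x_0\in\mathrm{Spt}(\rho_a)$ at which $\psi_*$ is differentiable; since $\psi_*$ is Lipschitz and $\rho_a$ is absolutely continuous with respect to Lebesgue measure, Rademacher's theorem guarantees this holds $\rho_a$-a.e. For any $y_0$ with $(x_0,y_0)\in\mathrm{Spt}(\pi_*)$, the function $x\mapsto c(x,y_0)-\psi_*(x)$ attains its minimum at $x_0$, so first-order optimality gives $\nabla_x c(x_0,y_0)=\nabla\psi_*(x_0)$. The twist assumption \eqref{condition c b} states that $y\mapsto \nabla_x c(x_0,y)$ is injective on $\mathrm{Spt}(\rho_b)$, so $y_0$ is uniquely determined by $x_0$; call it $T_*(x_0)$. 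This disintegration argument shows $\pi_*=(\mathrm{Id},T_*)_\sharp\rho_a$, hence $T_*$ pushes $\rho_a$ to $\rho_b$ and attains the Kantorovich cost, so it solves the Monge problem \eqref{Monge problem}.

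For step (iii), uniqueness-in-law follows directly from the construction: any other Monge optimizer $T_{**}$ would produce a Kantorovich optimizer $(\mathrm{Id},T_{**})_\sharp\rho_a$ that must also be supported on the graph of the map implicitly defined by $\nabla\psi_*$ via the twist condition, so $T_*=T_{**}$ off a $\rho_a$-null set. The main technical obstacle in writing this out carefully is step (ii): one must justify that the twist condition \eqref{condition c b} can be applied at a.e.\ point $x_0$, which requires (a) differentiability of $\psi_*$ almost everywhere (delivered by Rademacher plus absolute continuity of $\rho_a$) and (b) that $y_0$ lies in the interior of a domain where $\nabla_x c(x_0,\cdot)$ is genuinely injective. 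Boundary behavior and points where $\nabla\psi_*$ fails to exist are dealt with by collecting them into the exceptional null set $E_0$ of the theorem statement. Everything else is bookkeeping inherited from Theorem 10.28 of \cite{villani2008optimal}.
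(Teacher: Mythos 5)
The paper does not give its own proof of Theorem~\ref{thm monge}; it defers entirely to Theorem~10.28 and Remark~10.33 of \cite{villani2008optimal}. Your sketch is a correct reproduction of the Gangbo--McCann argument underlying that theorem, so you are following the same route as the reference the paper invokes: Kantorovich duality, Rademacher differentiability of the Lipschitz potential, the twist condition to concentrate $\pi_*$ on a graph, and uniqueness by showing any Kantorovich optimizer must live on the same graph.

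One slip worth fixing: from $\phi_*(y_0)-\psi_*(x)\leq c(x,y_0)$ with equality at $x_0$, the function that attains its minimum at $x_0$ is $x\mapsto \psi_*(x)+c(x,y_0)$, not $x\mapsto c(x,y_0)-\psi_*(x)$, so the first-order condition should read $\nabla\psi_*(x_0)+\partial_x c(x_0,y_0)=0$, matching the relation the paper records in its appendix restatement of the theorem. The argument survives unchanged after this correction, since the twist assumption \eqref{condition c b} determines $y_0$ from $\partial_x c(x_0,\cdot)=-\nabla\psi_*(x_0)$ just as well.
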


\section{Proposed method}

In order to formulate a tractable algorithm for the general Monge problem \eqref{Monge problem}, we first notice that \eqref{Monge problem} is a constrained optimization problem. Thus, we introduce a Lagrange multiplier $f$ for the constraint $T_\sharp\rho_a = \rho_b$ and reformulate \eqref{Monge problem} as a saddle point problem
\begin{equation}
  \sup_f\inf_T \mathcal{L}(T,f)  \label{max - min} ,
\end{equation}
where $\mathcal{L} (T,f)$ equals
\begin{align}
    &
  \int_{\mathbb{R}^n}  c(x,T(x))\rho_adx+\int_{\mathbb{R}^m} f(y)(\rho_b-T_\sharp\rho_a)dy  \nonumber             \\  \label{eq:L def}
  = & \int_{\mathbb{R}^n} \!\!\! \left[ c(x,T(x))-f(T(x))\right]\rho_adx + \int_{\mathbb{R}^m} \!\! f(y)\rho_bdy.
\end{align}
The following theorem ensures the consistency of the max-min formulation \eqref{max - min}.
The proof
is provided in Section \ref{proof thm consistency}

\begin{theorem}[Consistency]\label{thm consistent}

  Suppose the max-min problem \eqref{max - min} admits at least one saddle point solution. We denote it as $(\bar{T},\bar{f})$. Under the assumption $\bar{T}_{\sharp} \rho_a=\rho_b$, we have
  \begin{itemize}
    \item
          $\phi=\bar{f}$ is an optimal solution
          to the Kantorovich dual problem \eqref{Kantorovich formula 1 }.
    \item The optimal solution to the Monge problem \eqref{Monge problem} exists, and $\bar{T}$ is an optimal solution.
    \item
          $\mathcal{L}(\bar{T}, \bar{f})=C_{\textrm{Monge}}(\rho_a, \rho_b)$.
  \end{itemize}
\end{theorem}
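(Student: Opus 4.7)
The plan is to leverage the two halves of the saddle-point inequality $\mathcal{L}(\bar{T},f)\leq \mathcal{L}(\bar{T},\bar{f})\leq \mathcal{L}(T,\bar{f})$ together with the feasibility assumption $\bar{T}_\sharp\rho_a=\rho_b$, and to match $\mathcal{L}(\bar{T},\bar{f})$ with the Monge primal value on one side and with the Kantorovich dual value on the other. The starting observation is that when $\bar{T}_\sharp\rho_a=\rho_b$ the multiplier term in definition \eqref{eq:L def} vanishes, so $\mathcal{L}(\bar{T},\bar{f})=\int c(x,\bar{T}(x))\rho_a(x)\,dx$.

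I would first establish the second and third bullets. Feasibility of $\bar{T}$ gives the trivial inequality $\mathcal{L}(\bar{T},\bar{f})\geq C_{\textrm{Monge}}(\rho_a,\rho_b)$. For the reverse direction I would apply the saddle inequality $\mathcal{L}(\bar{T},\bar{f})\leq \mathcal{L}(T,\bar{f})$ to an arbitrary admissible $T$ (one with $T_\sharp\rho_a=\rho_b$); the multiplier term vanishes again, and taking the infimum over such $T$ yields $\mathcal{L}(\bar{T},\bar{f})\leq C_{\textrm{Monge}}(\rho_a,\rho_b)$. Together these force $\mathcal{L}(\bar{T},\bar{f})=C_{\textrm{Monge}}(\rho_a,\rho_b)$, which is the third bullet, and show that $\bar{T}$ attains the Monge infimum, giving the second bullet (existence included).

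For the first bullet I would carry out the inner minimization pointwise. With $f^c(x):=\sup_y[f(y)-c(x,y)]$ the $c$-transform of $f$, I expect the identity
\begin{equation*}
\inf_T \mathcal{L}(T,f)=\int f(y)\rho_b(y)\,dy-\int f^c(x)\rho_a(x)\,dx,
\end{equation*}
because the pointwise minimizer $\argmin_y[c(x,y)-f(y)]$ can be promoted to a measurable map $T$ by the regularity of $c$ in \eqref{condition c low bdd}--\eqref{condition c b}. The pair $(\phi,\psi)=(f,f^c)$ is automatically feasible for \eqref{Kantorovich formula 1 } and its dual objective equals the right-hand side above. Specializing to $f=\bar{f}$ and using the previous paragraph, this admissible dual pair has value $C_{\textrm{Monge}}(\rho_a,\rho_b)$. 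Combining with weak Kantorovich duality and the equality between the Kantorovich primal value and $C_{\textrm{Monge}}$ guaranteed by Theorem \ref{thm monge} and the dual theory recalled in Appendix \ref{background}, this value is optimal, so $\phi=\bar{f}$ is an optimal Kantorovich dual potential.

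The main obstacle I anticipate is the measurable-selection step hidden in the $c$-transform identity: pulling $\inf_y$ inside the integral requires realizing the set-valued map $x\mapsto\argmin_y[c(x,y)-\bar{f}(y)]$ by a measurable single-valued map $T$. The coercivity and continuity assumptions on $c$ should allow a Kuratowski--Ryll-Nardzewski style selection, but $\bar{f}$ itself is only known to be a Lagrange multiplier and may lack a priori regularity, so some care is needed to verify the measurability hypotheses (and $L^1$ integrability of $\bar{f}^c$). Apart from this analytical point, the remaining steps are algebraic manipulations of the saddle inequalities and routine weak duality.
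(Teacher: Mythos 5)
Your proof is correct, and for the second and third bullets it follows a genuinely more elementary route than the paper. The paper establishes those bullets by first identifying $\sup_f\inf_T\mathcal{L}$ with the Kantorovich dual \eqref{Kantorovich formula 3 }, then using the $c$-transform identity $\bar{f}^{c,-}(x)=\bar{f}(\bar{T}(x))-c(x,\bar{T}(x))$ together with the pushforward assumption to bound $\int c(x,\bar{T}(x))\rho_a\,dx$ above by $C(\rho_a,\rho_b)$ via an arbitrary coupling, and below by the same quantity via any admissible transport map. You instead work entirely at the primal level: noting that the multiplier term in $\mathcal{L}$ vanishes on the set of admissible maps, you restrict the saddle inequality $\mathcal{L}(\bar{T},\bar{f})\leq\mathcal{L}(T,\bar{f})$ to admissible $T$, take the infimum, and sandwich $\mathcal{L}(\bar{T},\bar{f})$ between $C_{\textrm{Monge}}$ from both sides. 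This bypasses the Kantorovich machinery for bullets 2 and 3 entirely, which is a pleasant simplification. For bullet 1 the paper is slightly tighter: having written $\sup_f\inf_T\mathcal{L}$ directly as the dual problem, optimality of $\bar{f}$ drops out of the saddle-point structure alone, with no need for the feasibility assumption or the value-matching with $C_{\textrm{Monge}}$ that you invoke. Your route to the same conclusion is a valid but slightly longer detour through weak duality. Finally, the measurable-selection issue you flag in the $c$-transform identity $\inf_T\mathcal{L}(T,f)=\int f\,\rho_b-\int f^{c,-}\rho_a$ is a fair concern, but it applies equally to the paper's proof, which also asserts this interchange of infimum and integral without elaboration; neither proof actually discharges this technicality, so you are not at a disadvantage here.
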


Without the assumption $\bar{T}_\sharp\rho_a=\rho_b$, we no longer have the guarantee on the existence of Monge map $T_*$, not to mention the consistency between $\bar{T}$ and $T_*$. However, we are still able to show the consistency between the saddle point value $\mathcal{L}(\bar{T}, \bar{f})$ and the general OT distance $C(\rho_a,\rho_b)$.
\begin{theorem}[Consistency without $\bar{T}_\sharp\rho_a=\rho_b$]\label{thm weak consist}
  Suppose the max-min problem \eqref{max - min} admits at least one saddle point solution  $(\bar{T},\bar{f})$, then $\mathcal{L}(\bar{T}, \bar{f})=C(\rho_a, \rho_b)$.
\end{theorem}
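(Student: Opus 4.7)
The plan is to exploit the saddle-point identity to identify $\mathcal{L}(\bar T,\bar f)$ with the min-max value $\sup_f\inf_T \mathcal{L}(T,f)$, and then to show that this value equals the Kantorovich cost $C(\rho_a,\rho_b)$ via the standard duality \eqref{Kantorovich formula 1 }, without ever invoking the constraint $\bar T_\sharp\rho_a=\rho_b$.

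First, from the definition of a saddle point, $\mathcal{L}(\bar T,f)\le\mathcal{L}(\bar T,\bar f)\le\mathcal{L}(T,\bar f)$ for all admissible $T,f$. Chained with the standard weak-duality inequality $\sup_f\inf_T\le\inf_T\sup_f$, this gives
\[
  \mathcal{L}(\bar T,\bar f)=\inf_T\mathcal{L}(T,\bar f)\le\sup_f\inf_T\mathcal{L}(T,f)\le\inf_T\sup_f\mathcal{L}(T,f)\le\sup_f\mathcal{L}(\bar T,f)=\mathcal{L}(\bar T,\bar f),
\]
so in particular $\mathcal{L}(\bar T,\bar f)=\sup_f\inf_T\mathcal{L}(T,f)$.

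Second, I would compute the inner infimum at a fixed $f$. Using \eqref{eq:L def}, only the first term depends on $T$, and its integrand is minimized pointwise in $y=T(x)$. Under the regularity assumptions \eqref{condition c low bdd}--\eqref{condition c b} on $c$ from Appendix \ref{background}, a measurable-selection argument lets us interchange the infimum with the integral, yielding
\[
  \inf_T \mathcal{L}(T,f) = \int_{\mathbb{R}^m} f(y)\rho_b(y)\,dy - \int_{\mathbb{R}^n} f^c(x)\rho_a(x)\,dx,
\]
where $f^c(x) := \sup_y [f(y)-c(x,y)]$ is the $c$-transform of $f$.

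Third, I would match this against the Kantorovich dual. By construction $(\phi,\psi)=(f,f^c)$ satisfies $\phi(y)-\psi(x)\le c(x,y)$, hence it is admissible in \eqref{Kantorovich formula 1 }, giving $\sup_f\inf_T\mathcal{L}(T,f)\le C(\rho_a,\rho_b)$. Conversely, any admissible Kantorovich-dual pair $(\phi,\psi)$ satisfies $\psi\ge\phi^c$ pointwise, so
\[
  \int\phi\,\rho_b\,dy - \int\psi\,\rho_a\,dx \;\le\; \int\phi\,\rho_b\,dy - \int\phi^c\,\rho_a\,dx \;=\; \inf_T\mathcal{L}(T,\phi);
\]
taking the supremum over admissible $(\phi,\psi)$ and invoking classical Kantorovich duality (Villani Ch.~5) yields $C(\rho_a,\rho_b)\le \sup_f\inf_T\mathcal{L}(T,f)$. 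Combining both inequalities with step one gives $\mathcal{L}(\bar T,\bar f)=C(\rho_a,\rho_b)$.

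The main technical obstacle is the inf--integral swap in step two: one must produce a measurable $T$ that realizes the pointwise minimizer $\argmin_y[c(x,y)-f(y)]$, which is exactly what the lower-semicontinuity/coercivity conditions imposed on $c$ in the appendix guarantee. Everything else is the min-max formalism plus classical $c$-duality, and crucially none of it requires $\bar T_\sharp\rho_a=\rho_b$, which is the only reason Theorem~\ref{thm consistent} needs the stronger conclusion $\mathcal{L}(\bar T,\bar f)=C_{\textrm{Monge}}(\rho_a,\rho_b)$ rather than the generally weaker $C(\rho_a,\rho_b)$.
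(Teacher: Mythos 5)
Your proof is correct and follows essentially the same route as the paper: compute $\inf_T\mathcal{L}(T,f)$ explicitly to recognize the Kantorovich dual objective with $\psi=f^{c}$, then invoke Kantorovich duality to identify the max-min value with $C(\rho_a,\rho_b)$. You are slightly more careful than the paper in making explicit the chain $\mathcal{L}(\bar T,\bar f)=\inf_T\mathcal{L}(T,\bar f)\le\sup_f\inf_T\mathcal{L}\le\inf_T\sup_f\mathcal{L}\le\sup_f\mathcal{L}(\bar T,f)=\mathcal{L}(\bar T,\bar f)$, which the paper leaves implicit when it jumps from $\sup_f\inf_T\mathcal{L}=C(\rho_a,\rho_b)$ to the assertion about $\mathcal{L}(\bar T,\bar f)$.
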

The proof to Theorem \ref{thm weak consist} and the related example are presented in Appendix \ref{proof thm consistency}.
In implementation, we parametrize both the map $T$ and the dual variable $f$ by the neural networks $T_\theta, f_\eta$, with $\theta,\eta$ being the parameters of the networks. Consequently, our goal becomes solving the following saddle point problem
\begin{align}
  \max_\eta \min_\theta & ~ \mathcal{L}(T_\theta,f_\eta) :=  \label{eqn:obj}                                              \\
                        & \frac{1}{N} \sum_{k=1}^N c( {X_k}, T_\theta( {X_k})) - f_\eta(T_\theta( {X_k})) +f_\eta( {Y_k})
  \nonumber
\end{align}
where $N$ is size of the datasets
and $\{X_k\}, \{Y_k\}$ are samples drawn by $\rho_a$ and $\rho_b$ separately. The algorithm is summarized in Algorithm \ref{alg:1}. The computational complexity of our algorithm is similar with Generative Adversarial Network (GAN)-type methods. In particular, Algorithm \ref{alg:1} requires $\mathcal O(K(K_1 + K2) B )$ operations in total.

\begin{algorithm}[hb!]
  \caption{Computing
    optimal Monge map from $\rho_a$ to $\rho_b$
  }
  \begin{algorithmic}[1]%
    \State \textbf{Input}:
    Marginal distributions $\rho_a$ and $\rho_b$, Batch size $B$, Cost function $c(x,y)$.
    \State Initialize ${T}_{\theta}, {f}_{\eta}$.
    \For{$K $ steps}
    \State Sample $\{X_k\}_{k=1}^B$ $\sim \rho_a$.
    Sample $\{Y_k\}_{k=1}^B$ $\sim  \rho_b$.
    \State Update
    ${\theta}$ to {decrease} (\ref{eqn:obj}) for $K_1$ steps.
    \State Update
    ${\eta}$ to {increase} (\ref{eqn:obj}) for $K_2$ steps.
    \EndFor
    \State \textbf{Output}: The transport map $T_\theta$.
  \end{algorithmic}
  \label{alg:1}
\end{algorithm}

\paragraph{Comparison with GAN}
It is worth pointing out that our method and the Wasserstein Generative Adversarial Network (WGAN) \cite{wgan} are similar in the sense that they are both carrying out minimization over the generator/transport map and maximization over the discriminator/dual potential. However, there are two main distinctions, which are summarized in the following.
\vspace{-0.2cm}
\begin{itemize}
  \item (\textbf{Purpose}: arbitrary map vs optimal map) The purpose of WGAN is to compute for an \textit{arbitrary map} $T$ such that $T_\sharp\rho_a$ is close to the target distribution $\rho_b$; On the other hand, the purpose of our method is two-folds: we not only compute for the map $T$ that pushforwards $\rho_a$ to $\rho_b$, but also guarantee the \textit{optimality} of $T$ in the sense of minimizing the total transportation cost $\mathbb{E}_{X\sim\rho_a}c(X, T(X))$.
  \item (\textbf{Designing logic}: minimizing distance vs computing distance itself) In WGAN, one aims at minimizing the OT distance as the discrepancy between $T_\sharp\rho_a$ and the target distribution $\rho_b$, the optimal value of the corresponding loss function thus equals to 0; In contrast, our proposed method
        computes
        OT distance (as well as the transport map) between source distribution $\rho_a$ and the target $\rho_b$. In this case, the optimal value of the associated loss function equals $C_{\textrm{Monge}}(\rho_a, \rho_b)$.
\end{itemize}
We provide more detailed discussion on the comparison between our method and WGAN in Appendix \ref{rel GAN Monge}.

\section{
  Error Estimation via Duality Gaps}
In this section, we assume $m=n=d$, i.e. we consider Monge problem on $\mathbb{R}^d$. Suppose we solve \eqref{max - min} to a certain stage and obtain the pair $(T,f)$, inspired by \citet{hutter2020minimax} and \citet{icnnot}, we provide an {\it a posterior} estimate to a weighted $L^2$ error between our computed map $T$ and the optimal Monge map $T_*$. Before we present our result, we need the following assumptions
\begin{assumption}[on cost $c(\cdot, \cdot)$] \label{assmpt cost}
  We assume $c\in C^2(\mathbb{R}^d\times\mathbb{R}^d)$ is bounded from below. Furthermore, for any $x, y \in \mathbb{R}^d$, we assume $\partial_x c(x,y)$ is injective w.r.t. $y$; $\partial_{xy}c(x,y)$, as a $d\times d$ matrix, is invertible; and $\partial_{yy}c(x,y)$ is independent of $x$.
\end{assumption}

\begin{assumption}[on marginals $\rho_a$, $\rho_b$]\label{assmpt rho}
  We assume that $\rho_a,\rho_b$ are compactly supported on $\mathbb{R}^d$, and $\rho_a$ is absolutely continuous w.r.t. the Lebesgue measure.
\end{assumption}

\begin{assumption}[on dual variable $f$]\label{assmpt dual varb}
  Assume the dual variable $f\in C^2(\mathbb{R}^d)$ is always taken from $c$-concave functions, i.e., there exists certain $\varphi\in C^2(\mathbb{R}^d)$ such that $f(\cdot) = \inf_x\{\varphi(x)+c(x,\cdot)\}$ (c.f. Definition 5.7 of \cite{villani2008optimal}). Furthermore, we  assume that there exists a unique minimizer $x_y \in \textrm{argmin}_x\{\varphi(x) + c(x,y)\}$ for any $y\in \mathbb{R}^d$.
  And the Hessian of $\varphi(\cdot) + c(\cdot, y)$ at $x_y$ is positive definite.
\end{assumption}
For the sake of conciseness, let us introduce two notations. We denote $\sigma(x,y) = \sigma_{\min}(\partial_{xy}c(x,y))>0$ as the minimum singular value of $\partial_{xy}c(x,y)$; and denote $\lambda(y) = \lambda_{\max}(\nabla^2_{xx}(\varphi(x)+c(x,y))|_{x=x_y})>0$ as the maximum eigenvalue of the Hessian of $\varphi(\cdot)+c(\cdot,y)$ at $x_y$.

We denote the duality gaps as
\begin{align*}
   & \mathcal{E}_1(T, f) = \mathcal{L}(T,f) - \inf_{\widetilde{T}}\mathcal{L}(\widetilde{T},f),                                                                  \\
   & \mathcal{E}_2(f) = \sup_{\widetilde{f}} \inf_{\widetilde{T}} \mathcal{L}(\widetilde{T},\widetilde{f}) - \inf_{\widetilde{T}} \mathcal{L}(\widetilde{T},f) .\end{align*}
It is not hard to verify that the conditions mentioned in Theorem \ref{thm monge} are satisfied under Assumption \ref{assmpt cost} and Assumption \ref{assmpt rho}. Thus the Monge map $T_*$ to the Monge problem \eqref{Monge problem} exists and is unique. We now state the main theorem on error estimation:
\begin{theorem}[Posterior Error Enstimation via Duality Gaps]\label{thmerror est}
  Suppose Assumption \ref{assmpt cost}, \ref{assmpt rho} and \ref{assmpt dual varb} hold. Let us further assume the max-min problem \eqref{max - min} admits a saddle point $(\bar{f}, \bar{T})$ that is consistent with the Monge problem, i.e. $\bar{T}$ equals $T_*$, $\rho_a$ almost surely. Then there exists a strict positive weight function $\beta(x) > \underset{y\in\mathbb{R}^m}{\min}\left\{\frac{\sigma(x,y)}{2\lambda(y)}\right\}$ such that the weighted $L^2$ error between computed map $T$ and the Monge map $T_*$ is upper bounded by
  \begin{equation*}
    \|T-T_*\|_{L^2(\beta\rho_a)} \leq \sqrt{2(\mathcal{E}_1(T, f) + \mathcal{E}_2(f))}.
  \end{equation*}
  The exact formulation of $\beta$ is provided in \eqref{def beta} in the appendix
  \ref{proof thm err est}.
\end{theorem}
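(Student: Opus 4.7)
The plan is to factor the total error $\|T - T_*\|$ through the intermediate map $T_f(x) := \argmin_y\{c(x,y) - f(y)\}$. Under Assumption~\ref{assmpt dual varb} the map $T_f$ is well defined pointwise (since $f$ is $c$-concave with smooth potential $\varphi$), and by construction $\mathcal{L}(T_f, f) = \inf_{\widetilde T}\mathcal{L}(\widetilde T, f)$, so
\begin{equation*}
  \mathcal{E}_1(T, f) = \int g_f(x, T(x))\,\rho_a(x)\,dx, \quad g_f(x, y) := \bigl[c(x,y) - f(y)\bigr] - \bigl[c(x, T_f(x)) - f(T_f(x))\bigr] \ge 0.
\end{equation*}
For $\mathcal{E}_2(f)$ I would use the hypothesis $\bar T = T_*$ ($\rho_a$-a.s.) together with the pushforward constraint $(T_*)_\sharp\rho_a = \rho_b$: pushing $\bar f - f$ forward under $T_*$ shows $\mathcal{L}(T_*, \bar f) = \mathcal{L}(T_*, f)$ for any $f$, hence $\mathcal{E}_2(f) = \mathcal{L}(\bar T, \bar f) - \mathcal{L}(T_f, f) = \mathcal{L}(T_*, f) - \mathcal{L}(T_f, f) = \int g_f(x, T_*(x))\,\rho_a(x)\,dx$. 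Thus $\mathcal{E}_1$ and $\mathcal{E}_2$ are both integrals of the same nonnegative integrand $g_f(x, \cdot)$, evaluated respectively at $T$ and at $T_*$.

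The crux is a pointwise quadratic lower bound $g_f(x, y) \ge \beta(x)\|y - T_f(x)\|^2$ in which $\beta$ depends only on the Hessian data of $c$ and $\varphi$. Since $T_f(x)$ is an interior minimizer of $y \mapsto c(x, y) - f(y)$, the first-order term in a Taylor expansion about $T_f(x)$ vanishes, leaving
\begin{equation*}
  g_f(x, y) = \tfrac12 \bigl(y - T_f(x)\bigr)^\top \bigl[\partial_{yy}c(\xi) - \nabla^2 f(\xi)\bigr] \bigl(y - T_f(x)\bigr)
\end{equation*}
for some $\xi$ on the segment from $T_f(x)$ to $y$; Assumption~\ref{assmpt cost} removes the $x$-dependence from $\partial_{yy} c$. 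Differentiating the envelope identity $f(y) = \varphi(z_y) + c(z_y, y)$ and the stationarity relation $\nabla \varphi(z_y) + \partial_z c(z_y, y) = 0$ for $z_y := \argmin_z\{\varphi(z) + c(z, y)\}$ yields the clean expression
\begin{equation*}
  \partial_{yy}c - \nabla^2 f(y) = \partial_{yz} c(z_y, y) \bigl[\nabla^2 \varphi(z_y) + \partial_{zz} c(z_y, y)\bigr]^{-1} \partial_{zy} c(z_y, y),
\end{equation*}
a positive semidefinite matrix whose smallest eigenvalue is bounded below by $\sigma(z_y, y)^2/\lambda(y)$. Taking an infimum in the second slot produces the weight $\beta(x)$ announced in the theorem.

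Once the pointwise bound is established, integrating against $\rho_a$ gives $\|T - T_f\|_{L^2(\beta\rho_a)}^2 \le \mathcal{E}_1(T, f)$ and $\|T_* - T_f\|_{L^2(\beta\rho_a)}^2 \le \mathcal{E}_2(f)$, and the parallelogram identity applied pointwise to the triple $T(x), T_*(x), T_f(x)$ delivers
\begin{equation*}
  \|T - T_*\|_{L^2(\beta\rho_a)}^2 \le 2\bigl(\|T - T_f\|_{L^2(\beta\rho_a)}^2 + \|T_* - T_f\|_{L^2(\beta\rho_a)}^2\bigr) \le 2\bigl(\mathcal{E}_1(T, f) + \mathcal{E}_2(f)\bigr),
\end{equation*}
and taking square roots concludes the argument.

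The main obstacle is that the envelope formula for $\partial_{yy}c - \nabla^2 f$ is clean only at the base point $y$ (where $z_y$ is the genuine $c$-concave minimizer), whereas Taylor's theorem makes us evaluate the Hessian at an intermediate $\xi$ on the segment from $T_f(x)$ to $y$. Propagating the eigenvalue estimate $\sigma^2/\lambda$ uniformly along this segment, so as to produce a single positive weight $\beta(x) > \min_y \sigma(x,y)/(2\lambda(y))$ that is independent of the test target $y$, is where Assumption~\ref{assmpt dual varb} (smoothness of $\varphi$, uniqueness of $z_y$, and positive definiteness of the Hessian at $z_y$) does the substantive work, and this is the calculation that pins down the precise formula for $\beta$ in Appendix~\ref{proof thm err est}.
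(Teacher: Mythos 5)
Your proposal follows essentially the same route as the paper: introduce $T_f(x)=\argmax_y\{f(y)-c(x,y)\}$, rewrite both duality gaps as integrals of $\Psi_x(T_f(x))-\Psi_x(\cdot)$ evaluated at $T$ and $T_*$ respectively, obtain the pointwise quadratic bound from the envelope/implicit-function-theorem identity $\partial_{yy}c-\nabla^2 f(y)=\partial_{yx}c\,[\partial_{xx}^2(\varphi+c)]^{-1}\partial_{xy}c$ combined with the $x$-independence of $\partial_{yy}c$, and close with Taylor expansion plus the parallelogram identity. The only thing I would adjust is your final worry about propagating the eigenvalue bound ``uniformly along the segment'': no uniformity is needed, since the concavity lemma holds at every $y$ under Assumption~\ref{assmpt dual varb}, so the paper simply evaluates $\sigma$ and $\lambda$ at the Taylor intermediate points $\omega(x)$ and $\xi(x)$ and defines $\beta(x)$ pointwise from those two values.
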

\begin{proof}[Sketch of proof ]
  Let us denote $\Psi_x(y) = f(y)- c(x,y)$. A
  key step of the proof is to show certain concavity of $\Psi_x(\cdot)$ given $f$ as a $c-$concave function (c.f. Assumption \ref{assmpt dual varb}). This is proved in Lemma \ref{concavity lemm} in Appendix \ref{proof thm err est}.

  Once $\Psi_x(\cdot)$ is concave, there exists unique $T_f(x)\in\textrm{argmax}_y\{\Psi_x(y)\}$.
  Thus we are able to write $\mathcal{E}_1(f, T)$ as
  \begin{align*}
    \mathcal{E}_1(T,f)  = & -\int [f(T(x)) - c(x,T(x))]\rho_a~dx                                                             \\
                          & + \sup_{\widetilde{T}} \left\{\int[f(\widetilde{T}(x)) - c(x,\widetilde{T}(x))]\rho_a~dx\right\} \\
    =                     & \int [\Psi_x(T_f(x))-\Psi_x(T(x))]\rho_a(x)~dx.
  \end{align*}
  The concavity of $\Psi_x$ then leads to
  \begin{equation}
    \mathcal{E}_1(T,f)\geq \|T_f-T\|_{L^2(\beta\rho_a)}^2,\label{E1 est}
  \end{equation}
  where $\beta(\cdot)$ depends on the concavity of $\Psi_x(\cdot)$.

  Similarly, we can rewrite $\mathcal{E}_2(f)$ as
  \begin{align*}
    \mathcal{E}_2(f) = %
    \int [\Psi_x(T_f(x)) - \Psi_x(\bar{T}(x))]\rho_a(x)~dx.
  \end{align*}
  By the similar concavity argument, one can prove $\mathcal{E}_2(T,f)\geq \|T_f-\bar{T}\|_{L^2(\beta\rho_a)}^2$. Furthermore, since $\bar{T}=T_*$, $\rho_a$ almost surely, we have the following estimation
  \begin{equation}
    \mathcal{E}_2(T,f)\geq \|T_f-T_*\|_{L^2(\beta\rho_a)}^2. \label{E2 est}
  \end{equation}
  Finally, applying the triangle inequality to \eqref{E1 est}, \eqref{E2 est} leads to the desired estimation.
\end{proof}

\begin{remark}
  We can verify that $c(x,y) = \frac{1}{2}\|x-y\|^2$ or $c(x,y)=-x\cdot y$ satisfy the conditions mentioned above. %
  More specifically, when $c(x,y)=-x\cdot y$, then $\sigma(x,y)=1$, and $\lambda(y) = \frac{1}{\lambda_{\min}(\nabla^2 f(y))}$. This recovers the upper bounds of Theorem 3.6 mentioned in \cite{icnnot}.
\end{remark}
\begin{remark}
  Suppose $c$ satisfies Assumption \ref{assmpt cost}. If $c$ is also an analytical function, then $c$ has the particular form $\Psi(x) + F(x)^\top y + \Phi(y)$, where $\Psi,u,\Phi$ are analytical functions on $\mathbb{R}^d$, and $F:\mathbb{R}^d\rightarrow\mathbb{R}^d$ is analytic with invertible Jacobian $DF(x)$ at any $x\in \mathbb{R}^d$.
\end{remark}

\section{Related work}

Discrete OT methods~\citep{grouplasso,feydy2020fast,pooladian2021entropic,meng2019large} solve the  Kantorovich formulation with EMD~\citep{nash2000dantzig} or Sinkhorn algorithm~\citep{cuturi2013sinkhorn}.
It normally computes an optimal coupling of two empirical distributions, and cannot provide out-of-sample estimates.
With an additional parameterization of a mapping function, such as linear or kernel function space~\citep{perrot2016mapping}, they can map the unseen data. However, this map is not the solution of Monge problem but only an approximation of the optimal coupling~\citep[Sec. 2.1]{courty2017joint}.
\citet{perrot2016mapping} requires solving the matrix inverse when updating the transformation map, thus introduces a risk of instability. Their function space of the map is also less expressive as neural networks, which makes it difficult to solve the OT problem with complex transport costs and high dimension data. Moreover, they can not handle large scale datasets.

With the rise of neural networks, neural OT has become popular with an advantage of dealing with continuous measure.
\citet{largeot,soptilargescaleot} solve the regularized OT problem, and as such introduce the bias.
Another branch of work~\citep{icnnot,wasserstein2gan,icnnwasserbary,korotin2021continuous} comes with parameterizing Brenier potential by Input Convex Neural Network (ICNN)~\citep{icnn}. The notable work~\citep{icnnot} is special case of our formula. In particular, their dual formula reads
\begin{align*}
  \sup_{h} \inf_{g}  -\int [\<x,\nabla g(x) - h(\nabla g(x)) \> ] \rho_adx -\int h(y) \rho_b dy,
\end{align*}
where $g,h$ are ICNN.
After a change of variable $\nabla g =T , h = -f + \|\cdot\|^2/2 $, it becomes the same as our \eqref{eq:L def} equipped with the cost $c(x,y) = \|x-y\|^2/2 $.
Later, the ICNN parameterization was shown to be less suitable~\citep{otmap5,fan2022variational,korotin2022wasserstein} for large scale problems because ICNN is not expressive enough.

Recently, several works have illustrated the scalability of our dual formula with different realizations of the transportation costs.
When $c(x, y) = \|x-y\|_2^2/2$, our method directly boils down to reversed maximum solver~\citep{ nhan2019threeplayer}, which appears to be the best neural OT solvers among multiple baselines~\citep{otmap5}.
Another variant  \citep{rout2022generative} of our work obtains comparable performance in image generative models, which asserts the efficacy in unequal dimension tasks.
\citet{gazdieva2022unpaired} utilizes the same dual formula with more diverse costs in image super-resolution task.

Based on the similar dual formula, several works propose to add random noise as an additional input to make the map stochastic, i.e. one-to-many mapping, so it can approximate the Kantorovich OT plan.
To make the learning of stochastic map valid, \citet{korotin2022neural} extends the transport cost $c(x,y)$ to be \textit{weak cost} that depends on the mapped distribution. Typically, they involve the variance in the weak cost to enforce the diversity.
The stochastic map is, however, not suitable for our problem \eqref{Monge problem} because of the conditional collapse behavior~\citep[Sec 5.1]{korotin2022neural}.
Later, \citet{asadulaev2022neural} extends the dual formula to more general cost functional. We note that the term \textit{general cost} in \citet{asadulaev2022neural} is different with \textit{general cost} in our paper. Their formula is general in a higher level and extends ours.

\section{Experiments}
In this section, we specialize \eqref{eq:L def} with different general costs $c(x,y)$ to fit in various applications.
We will not focus on the quadratic cost $c(x,y)=\|x-y\|_2^2$ since our formula in this special case has been extensively studied in multiple scenarios, e.g.  generative model~\citep{rout2022generative}, super-resolution~\citep{gazdieva2022unpaired}, and style transfer~\citep{korotin2022neural}. Instead, we focus on data with specific structure, which will be revealed in the cost function.

\subsection{Unpaired text to image generation}
We consider the task of generating images given text prompts.
Existing successful text to image generation algorithms~\citep{ramesh2021zero,ramesh2022hierarchical,saharia2022photorealistic} are supervised learning methods, and as such rely on paired data. In real world, it can be exhausting to maintain large-scale paired datasets given that they are mostly web crawling data, and the validity period of web data is limited.
We intend to learn a map between the
the text and image embedding space of the CLIP model without paired data. Our framework is shown in Figure \ref{fig:diagram}. We use the same
unpaired data generation scheme as \citet{rout2022generative}.
\begin{figure}[h]
  \centering
  \begin{subfigure}{0.48\textwidth}
    \centering
    \includegraphics[width=1\linewidth]{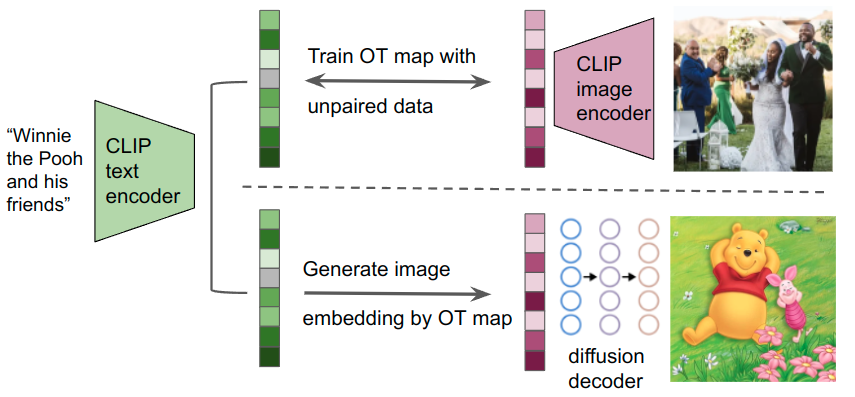}
  \end{subfigure}
  \caption{
    Our pipeline is motivated by DALL$\cdot$E2~\citep{ramesh2022hierarchical}.
    During the training (above the dotted line), our map learns to generate image embeddings that maximize the expected similarity with text embeddings. During the evaluation (below dotted line), given a text encoding (omitted from the figure) and a text embedding from the CLIP model, our map outputs an image embedding, which conditions a pretrained diffusion decoder to generate an image.
    The CLIP encoder and diffusion decoder are both pretrained and frozen.
  }
  \label{fig:diagram}
\end{figure}
\begin{figure}[h]
  \centering
  \begin{subfigure}{0.5\textwidth}
    \centering
    \includegraphics[width=1\linewidth]{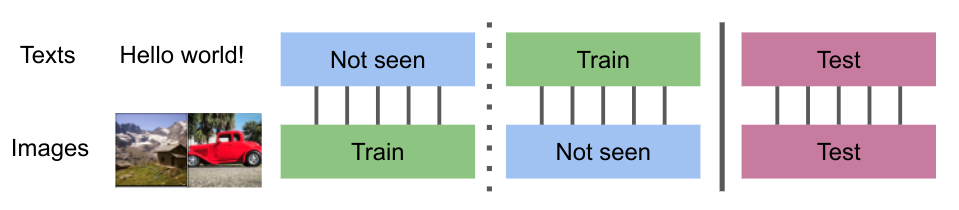}
  \end{subfigure}
  \caption{Our unpaired data generation process: we split a paired  text-image dataset training into two parts: $\mathcal S$ and $\mathcal T$, and take the texts from $\mathcal S$ and images from $\mathcal T$ as our training data. Finally, we use the paired test data so we can compare with the real images.
  }
  \label{fig:split}
\end{figure}

In our algorithm, the sample from source distribution $\rho_a$ is composed of a text encoding $Q \in\R^{77 \times 768} $ and a text embedding $x \in \R^{768}$ from CLIP~\citep{radford2021learning} model, and the sample from target distribution $\rho_b$ is the image embedding $y \in \R^{768}$.
The CLIP model is pretrained on 400M (text, image) pairs of web data with contrastive loss, such that the paired text and image embeddings share a large cosine similarity, and the non-paired embeddings present a small similarity.
As a result, we choose the transport cost as the negative cosine similarity
\begin{align*}
  c(x,y) = - \text{cos-sim} (x,y) = -\frac{\<x,y\>}{\|x\|_2 \|y\|_2},
\end{align*}
which would enforce our map to generate an image embedding relevant to the input text.

We evaluate our algorithm on two datasets: \href{https://github.com/rom1504/img2dataset/blob/main/dataset_examples/laion-art.md}{Laion art}
and \href{https://ai.google.com/research/ConceptualCaptions/download}{Conceptual captions 3M (CC-3M)}.
Laion art dataset is filtered from a 5 billion dataset to have the high aesthetic level, which is suitable for the learning of image generation, while CC-3M is not curated.
We use the OpenAI's CLIP (ViT-L/14) model and a \href{https://huggingface.co/laion/DALLE2-PyTorch/tree/main/decoder}{DALL$\cdot$E2 diffusion decoder}.
The training of their diffusion decoder requires paired data and its train dataset includes Laion art but not CC-3M. Therefore, our results on CC-3M are fully based on the unpaired data because no part in Figure \ref{fig:diagram} has seen paired data of CC-3M, while on Laion art, the diffusion decoder has some paired data information gained from pretraining.

We show qualitative samples in Figure \ref{fig:laion_art_prior} and \ref{fig:cc3m_prior}, where we compare with DALL$\cdot$E2-Laion, a DALL$\cdot$E2 model pretrained on the paired Laion aesthetic dataset, which includes Laion art dataset. Figure \ref{fig:laion_art_prior} confirms that our transport map can generate image embeddings with comparable quality even without paired data. Since CC-3M is a zero-shot dataset for DALL$\cdot$E2-Laion, the performance of DALL$\cdot$E2-Laion clearly drops in Figure \ref{fig:cc3m_prior}, while our model still generates reasonable images.

\begin{figure}[h]
  \centering
  \begin{subfigure}{0.24\textwidth}
    \centering
    \includegraphics[width=1\linewidth]{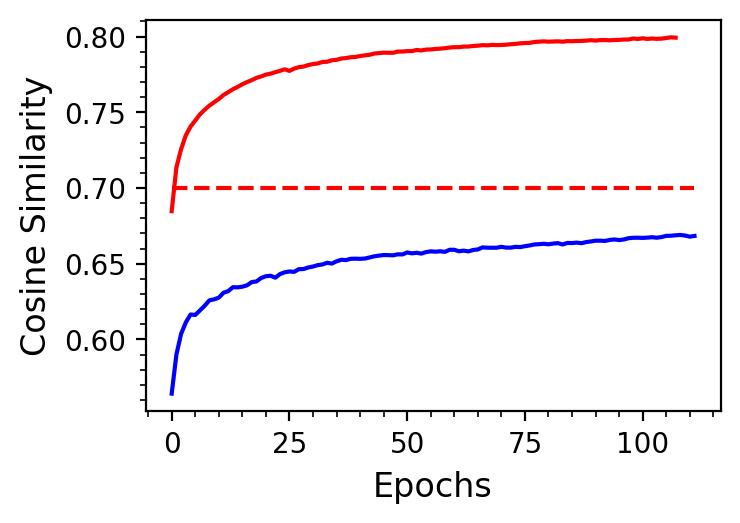}
    \caption{}
  \end{subfigure}
  \begin{subfigure}{0.235\textwidth}
    \centering
    \includegraphics[width=1\linewidth]{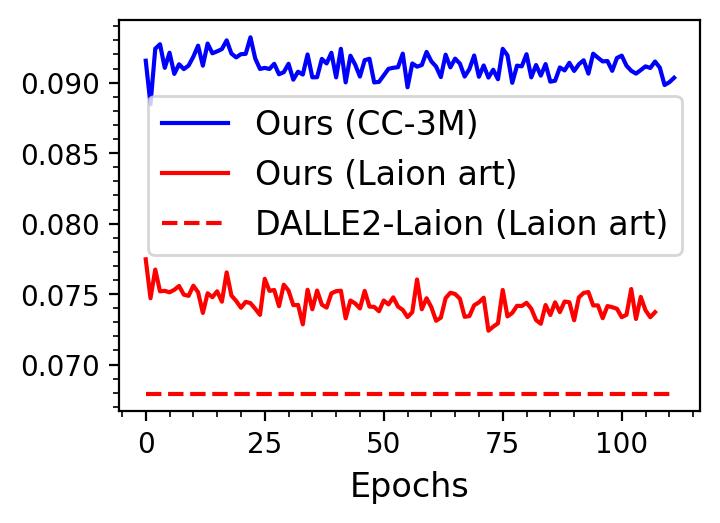}
    \caption{}
  \end{subfigure}
  \caption{
    Averaged cosine similarity between the generated image embeddings and a) the ground truth image embeddings b) the unrelated text embeddings. The similarity in the left depicts how well the model recovers the real image embedding, and the similarity in the right quantifies the overfitting behaviour (the lower the better).
  }
  \label{fig:sim}
\end{figure}
We also quantitatively compare with the baseline in terms of cosine similarity in Figure \ref{fig:sim}.
Our method achieves higher cosine similarity w.r.t. the real images on Laion art dataset. In practice, we observe that the relevance of each (text, image) pair in CC-3M is more noisy than Laion art, which makes the model more difficult to learn the real image embedding. This expalins why our cosine similarity w.r.t real image embedding on CC-3M can not converge to the same level as Laion art.  In the mean time, our overfitting level is very low on both datasets.

\subsection{Image inpainting}

In this section, we show the effectiveness of our method on the inpainting task with random rectangle masks. We take the distribution of occluded images to be $\rho_a$ and the distribution of the full images to be $\rho_b$. In many inpainting works, it's assumed that an unlimited amount of paired training data is accessible \citep{zeng2021cr}. However, most real-world applications do not involve the paired datasets. Accordingly, we consider the unpaired inpainting task, i.e. no pair of masked image and original image is accessible. The training and test data are generated in the same way as Figure \ref{fig:split}.
We choose cost function to be mean squared error (MSE) in the unmasked area
\begin{align*}
  c(x,y) = \alpha \cdot \frac{\|x \odot M - y \odot M\|_2^2}{n},
\end{align*}
where $M$ is a binary mask with the same size as the image. $M$ takes the value 1 in the unoccluded region, and 0 in the  unknown/missing region.  $\odot$ represents the point-wise multiplication, $\alpha$ is a tunable coefficient, and $n$ is dimension of $x$.
Intuitively, this works as a regularization that the pushforward images should be consistent with input images in the unmasked area. Empirically, the map learnt with a larger $\alpha$ can generate more realistic images with natural transition in the mask border and exhibit more details on the face (c.f. Figure \ref{fig:celeba64_addition} in the appendix).
\begin{figure}[ht!]
  \centering
  \begin{subfigure}{0.23\textwidth}
    \centering
    \includegraphics[width=1\linewidth]{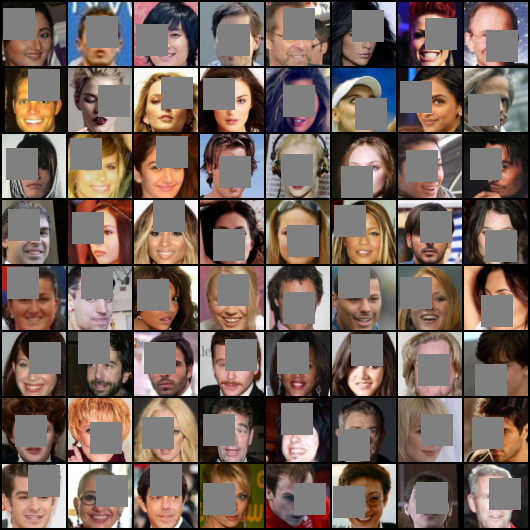}
    \includegraphics[width=1\linewidth]{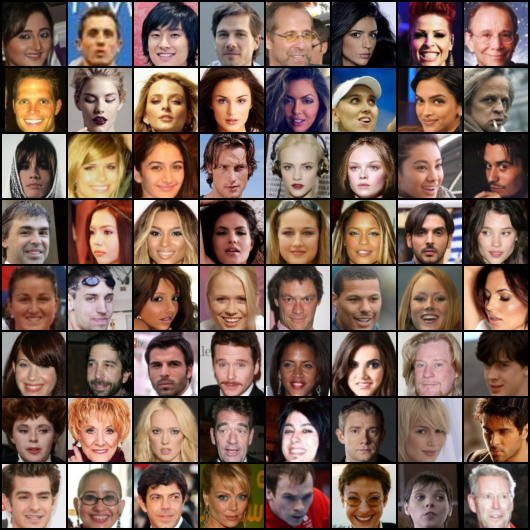}
    \caption{Masked and real images}
  \end{subfigure}
  \begin{subfigure}{0.23\textwidth}
    \centering
    \includegraphics[width=1\linewidth]{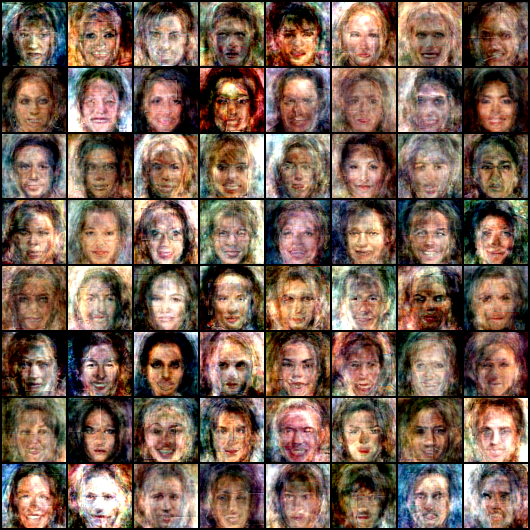}
    \includegraphics[width=1\linewidth]{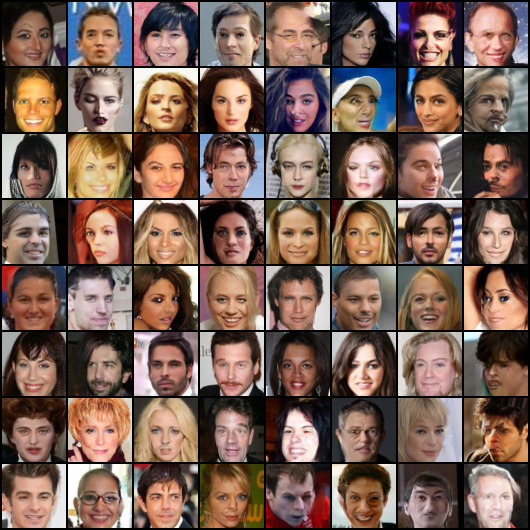}
    \caption{Pushforward images $T(x)$}
  \end{subfigure}
  \caption{Unpaired image inpainting on \textbf{test} dataset of CelebA $64\times 64$.
    In panel (b), we show the pushforward images of the discrete OT method \citep{perrot2016mapping} in the first row and ours in the second row.
  }
  \label{fig:celeba64}
\end{figure}

\begin{figure}[ht!]
  \centering
  \begin{subfigure}{0.5\textwidth}
    \centering
    \includegraphics[width=1\linewidth]{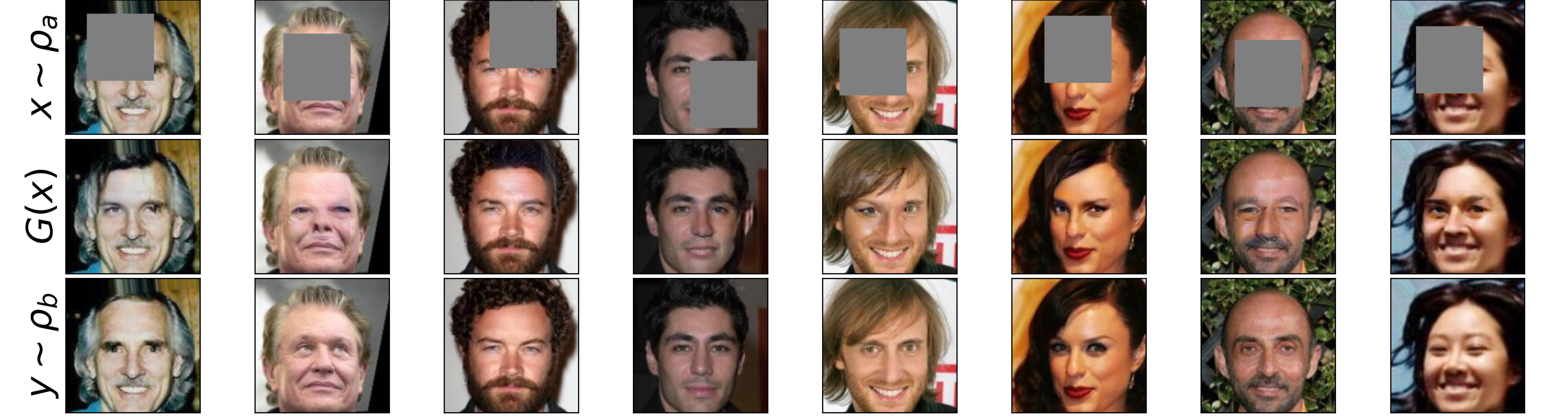}
  \end{subfigure}
  \caption{Unpaired image inpainting on \textbf{test} dataset of CelebA $128 \times 128$.
    Denote $M^C$ as the complement of $M$, i.e. $M^C=1$ in the occluded area and $0$ otherwise. We take the composite image
    $G(x)= T(x) \odot M^C + x \odot M$
    as the output image.
  }
  \label{fig:celeba128}
\end{figure}

We conduct the experiments on CelebA $64 \times 64$ and $128 \times 128$ datasets \citep{liu2015deep}. The input images ($\rho_a$) are occluded by randomly positioned square masks. Each of the source $\rho_a$ and target $\rho_b$ distributions  contains $80k$ images. We present the empirical results of inpainting in Figure \ref{fig:celeba64} and \ref{fig:celeba128}.
In Figure \ref{fig:celeba64}, we compare with the discrete OT method \citet{perrot2016mapping} since it also accepts general costs. It fits a map that approximates the barycentric projection so it can also provide out-of-sample mapping, although their map is not the solution of our Monge problem.
We use 1000 unpaired samples for \citet{perrot2016mapping} to learn the optimal coupling.
The comparison in Figure \ref{fig:celeba64} confirms that our recovered images have much better quality than the discrete OT method.

We also evaluate Fréchet Inception Distance \citep{heusel2017gans} of the generated composite images w.r.t. the original images on the test dataset. We use $40k$ images and compute the score with the implementation provided by \citet{obukhov2020torchfidelity}. We choose WGAN-GP~\citep{gulrajani2017improved} as the baseline and
show the comparison results in Table \ref{tab:fid}. It shows that the transportation cost $c(x,y)$ substantially promotes a map that generates more realistic images.

\begin{table}[h]
  \caption{Quantitative evaluation results on CelebA $64 \times 64$ test dataset.}
  \label{tab:fid}
  \begin{center}
    \setlength\tabcolsep{4pt} %
    \begin{tabular}{|c|cll|c|}
      \hline
      \multirow{2}{*}{} & \multicolumn{3}{c|}{Our method}   & \multirow{2}{*}{WGAN-GP}                                                                                    \\ \cline{2-4}
                        & \multicolumn{1}{c|}{$\alpha = 0$} & \multicolumn{1}{c|}{$\alpha = 1000$} & \multicolumn{1}{c|}{$\alpha = 10,000$} &                             \\ \hline
      FID               & \multicolumn{1}{c|}{ 18.7942}     & \multicolumn{1}{c|}{4.7621}          & \multicolumn{1}{c|}{\textbf{3.7109}}   & \multicolumn{1}{c|}{6.7479} \\ \hline
    \end{tabular}
  \end{center}
\end{table}

\vspace{-.6cm}

\subsection{Population transportation on the sphere}\label{population transportation}
Motivated by the spherical transportation example introduced in \citet[Section 4.2]{metaOT}, we consider the following population transport problem on the sphere as a synthetic example for testing our proposed method. It is well-known that the population on earth is not distributed uniformly over the land due to various factors such as landscape, climate, temperature, economy, etc. Suppose we ignore all these factors, and we would like to design an optimal transport plan under which the current population travel along the earth surface to form a rather uniform (in spherical coordinate) distribution over the earth landmass. We treat the earth as an ideal sphere with radius $1$, then we are able to formulate such problem as a Monge problem defined on $D=[0,2\pi)\times[0, \pi]$ with certain cost function $c$. To be more specific, we consider the spherical coordinate $(\theta, \phi)$ on $D$, which corresponds to the point $(\cos\theta\sin\phi, \sin\theta\sin\phi, \cos\phi)$ on the sphere. For any $(\theta_1,\phi_1), (\theta_2,\phi_2)\in D$, we set the distance $c(\cdot,\cdot)$ function as the \textit{geodesic distance} on sphere, which is formulated as
\begin{align*}
    & c((\theta_1,\phi_1), (\theta_2,\phi_2))                                    \\
  = & \arccos(\sin\phi_1\sin\phi_2\cos(\theta_1-\theta_2)+\cos\phi_1\cos\phi_2).
\end{align*}
Then assume the current population distribution over the sphere is denoted as $\rho_a$, this introduces the corresponding distribution $\rho_a^{\textrm{Sph}}$ on $D$. Denote $D_{\textrm{land}}\subset D$ as the region
of the land in the spherical coordinate system. We set the target distribution $\rho_b^{\textrm{Sph}}$ as the uniform distribution supported on $D_{\textrm{land}}$. We aim at solving the following Monge problem
\begin{equation*}
  \min_{T, T_\sharp{\rho}_a^{\textrm{Sph}}=\rho_b^{\textrm{Sph}}} \left\{ \int_D c((\theta,\phi), T(\theta, \phi))\rho_a^{\textrm{Sph}} ~d\theta d\phi \right\}.
\end{equation*}
The samples used in our experiment are generated from the licensed dataset from \cite{geodata}. In our exact implementation, in order to avoid the explosion of gradient of $\arccos(\cdot)$ near $\pm 1$, we replace the cost $c$ with its linearization $\widehat{c}=\frac{\pi}{2}-(\sin\phi_1\sin\phi_2\cos(\theta_1-\theta_2)+\cos\phi_1\cos\phi_2)$.
Furthermore, to guarantee that each sample point is transported onto landmass, we composite the trained map $T$ with a map $\tau:D\rightarrow D$ that remains samples on land unchanged but maps samples on sea back to the closest location on land among randomly selected sites.
In Figure \ref{spherical transport}, we compare our result with the
linear
transformation
method introduced in \citet{perrot2016mapping}.
For
more general kernel map, such as Gaussian kernel, their algorithm is not very stable and it is very difficult to obtain valid results.

More examples on distributions with unequal dimensions, cost of decreasing functions, and Monage map on sphere are presented in Appendix \ref{add }.

\begin{figure}[ht!]
  \centering
  \begin{subfigure}{0.45\textwidth}
    \centering
    \includegraphics[width=1\linewidth]{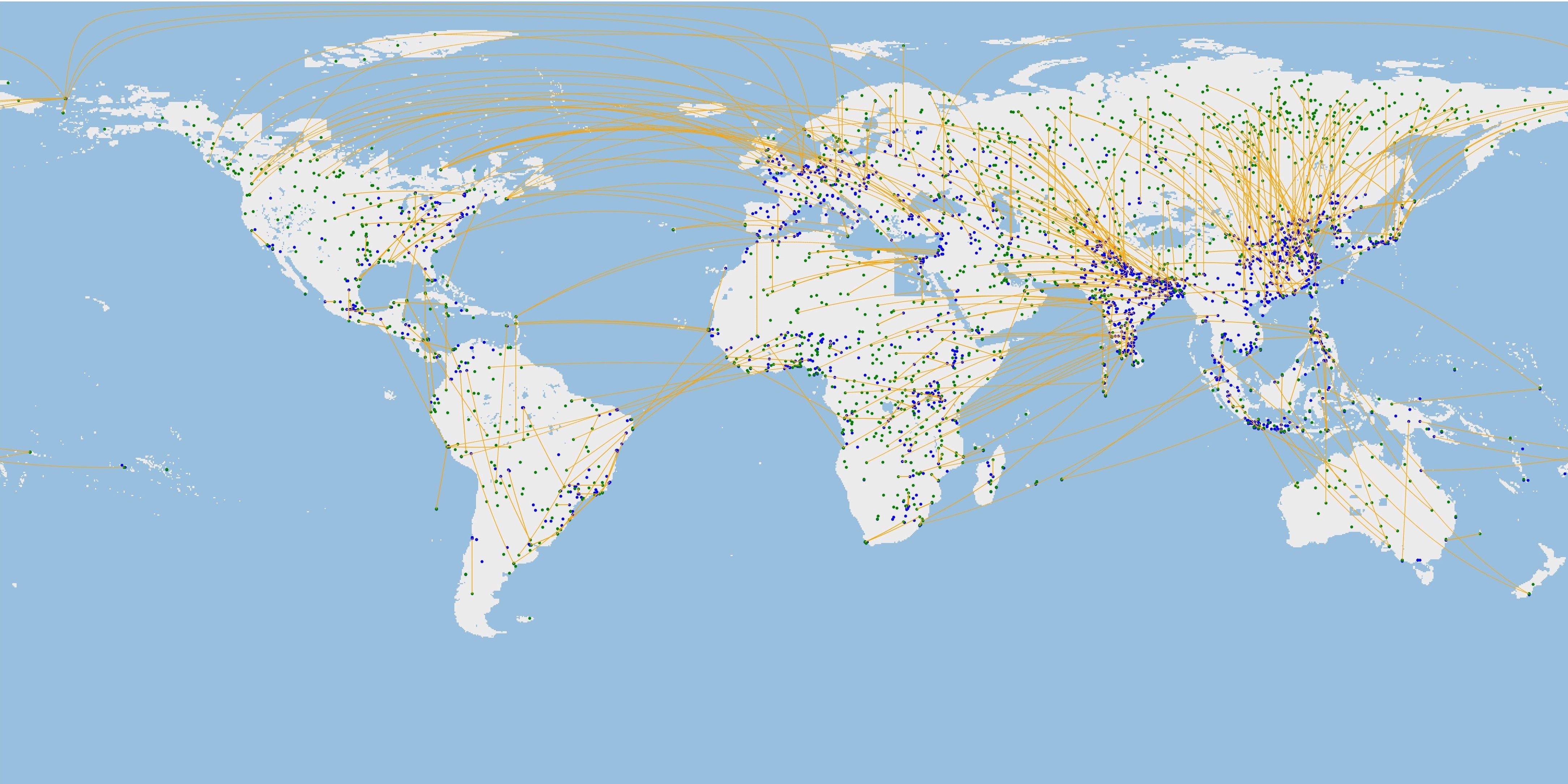}
    \vspace{-1cm}
  \end{subfigure}
  \begin{subfigure}{0.45\textwidth}
    \centering
    \includegraphics[width=1\linewidth]{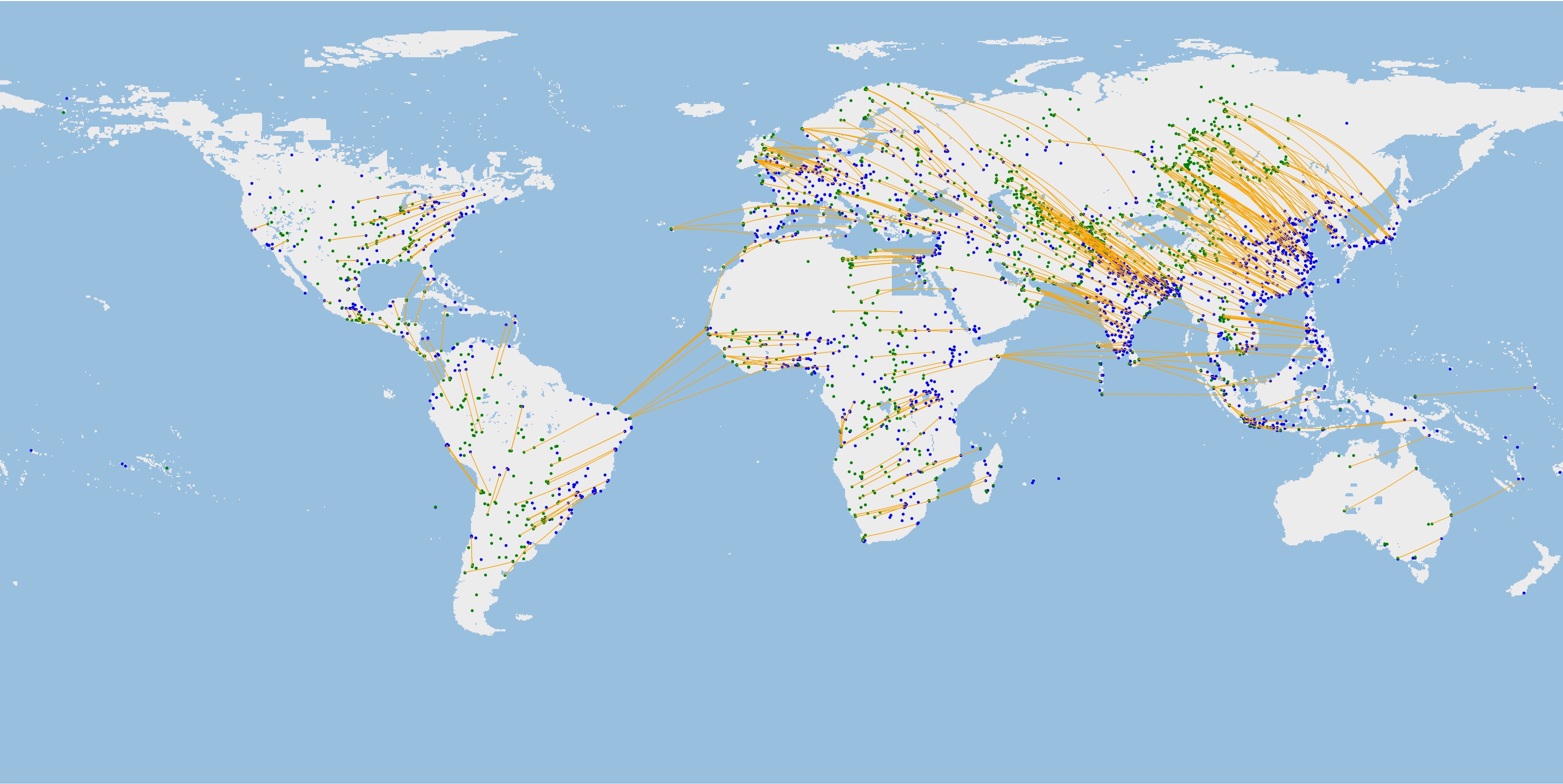}
  \end{subfigure}
  \caption{ We show our result on top and \citet{perrot2016mapping} in the bottom.
    Each figure plots samples from the source distribution $\rho^{\textrm{Sph}}_a$ (blue) and samples from the pushforwarded distribution $T_{\theta\sharp}\rho^{\textrm{Sph}}_a$ (green).
    We also demonstrates the computed transport map (orange)
    for the first $500$ randomly generated points from source $\rho_a^{\textrm{Sph}}$ to target $\rho_b^{\textrm{Sph}}$.}
  \label{spherical transport}
\end{figure}

\vspace{-0.5cm}

\section{Conclusion and limitation}
In this paper we present a novel method to compute Monge map between two given distributions with
flexible
transport cost functions.
In particular, we consider applying Lagrange multipliers on the Monge problem, which leads to a max-min saddle point problem.
By further introducing neural networks into our optimization,
we obtain a scalable algorithm that can handle most general costs and even the case where the dimensions of marginals are unequal.
Our scheme is shown to be effective through a series of experiments
with
high dimensional large scale datasets, where we only assume the unpaired samples from marginals are accessible.
It will become an useful tool for machine learning applications such as domain adaption, and image restoration that require transforming data distributions.
In the future, it is promising to specialize our method to structured data, such as time series, graphs, and point clouds.
\paragraph{Limitation}
We observe that when the target distribution is a discrete distribution that is with fixed support, our method tends to be more unstable. We conjecture  this is because the network parameterization of discriminator is too complex for this type of uncomplicated target distribution.

\clearpage
\bibliography{monge}

\clearpage
\appendix

\thispagestyle{empty}

\onecolumn

\section{Backgrounds of optimal transport problems}\label{background}

In this work, we assume that the cost function $c$ satisfies the following three conditions
\begin{align}
   & c \textrm{ is bounded from below, i.e., there exists } \ubar{c}\in\mathbb{R}, \textrm{ such that } c\geq \ubar{c} \textrm{ on } \mathbb{R}^n\times\mathbb{R}^m; %
  \label{condition c low bdd}                                                                                                                                        \\
   & c ~~ \textrm{is \textit{locally Lipschitz} and \textit{superdifferentiable} everywhere}; \label{condition c a}                                                  \\
   & \partial_x c(x,\cdot) ~~ \textrm{is injective for any} ~ x\in\mathbb{R}^n.\label{condition c b}
\end{align}
Here we define the local Lipschitz property and superdifferentiablity as follows.

\begin{definition}[Locally Lipschitz]
  Let $U\subset\mathbb{R}^n$ be open and let $f:\mathbb{R}^n\rightarrow \mathbb{R}$ be given. Then

  (1) $f$ is Lipschitz if there exists $L<\infty$ such that
  \begin{equation*}
    \forall x,z\in \mathbb{R}^n, \quad |f(z)-f(x)| \leq L|x-z|.
  \end{equation*}

  (2) $f$ is said to be locally Lipschitz if for any $x_0\in \mathbb{R}^n$, there is a neighbourhood $O$ of $x_0$ in which $f$ is Lipschitz.

\end{definition}

\begin{definition}[Superdifferentiablity]
  For function $f:\mathbb{R}^n\rightarrow \mathbb{R}$, we say $f$ is superdifferentiable at $x$, if there exists $p\in\mathbb{R}^n$, such that
  \begin{equation*}
    f(z)\geq f(x)+\langle p, z-x \rangle + o(|z-x|).
  \end{equation*}
\end{definition}

Recall that the Kantorovich dual problem of the primal OT problem \eqref{ g OT} is formulated as
\begin{equation*}
  K(\rho_a, \rho_b) = \sup_{\substack{\text{$(\psi,\phi)\in L^1(\rho_a)\times L^1(\rho_b)$} \\ \text{$\phi(y)-\psi(x)\leq c(x,y)~\forall~x\in\mathbb{R}^n,y\in\mathbb{R}^m$}}} \left\{\int_{\mathbb{R}^m} \phi(y)\rho_b(y)~dy - \int_{\mathbb{R}^n} \psi(x)\rho_a(x)~dx \right\}.
\end{equation*}
We denote the optimal value of this dual problem as $K(\rho_a, \rho_b)$.

It is not hard to tell that the above dual problem is equivalent to any of the following two problems
\begin{equation}
  \sup_{\psi\in L^1(\rho_a)} \left\{\int_{\mathbb{R}^m} \psi^{c,+}(y)\rho_b(y)~dy - \int_{\mathbb{R}^n} \psi(x)\rho_a(x)~dx \right\},  \label{Kantorovich formula 2 }
\end{equation}
\begin{equation}
  \sup_{\phi\in L^1(\rho_b)} \left\{\int_{\mathbb{R}^m} \phi(y)\rho_b(y)~dy - \int_{\mathbb{R}^n} \phi^{c,-}(x)\rho_a(x) ~dx \right\} . \label{Kantorovich formula 3 }
\end{equation}
Here the $c-$transform $\psi^{c,+}$, $\phi^{c,-}$ are defined via infimum/supremum convolution as:  $\psi^{c,+}(y) = \inf_{x}(\psi(x) + c(x,y))$ and $\phi^{c,-}(x) = \sup_y(\phi(y)-c(x,y))$. If we denote $\psi_*, \phi_*$ as the optimal solutions to \eqref{Kantorovich formula 2 }, \eqref{Kantorovich formula 3 } respectively. Then both $(\psi_*, \psi_*^{c, +})$ and $(\phi^{c, -}_*, \phi_*)$ are the optimal solutions to \eqref{Kantorovich formula 1 }; $\psi_*^{c,+}$, $\phi_*^{c,-}$ are also solutions to \eqref{Kantorovich formula 3 }, \eqref{Kantorovich formula 2 } respectively.

The following theorem states the equivalent relationship between the primal OT problem \eqref{ g OT} and its Kantorovich dual \eqref{Kantorovich formula 1 }.
The proof for a more general version can be found in Theorem 5.10 of \cite{villani2008optimal}.
\begin{theorem}[Kantorovich Duality]\label{thm: kan dual}
  Suppose $c$ is a cost function defined on $\mathbb{R}^n\times\mathbb{R}^m$ and satisfies  \eqref{condition c low bdd}. Recall that $C(\rho_a,\rho_b)$ denotes the infimum value of \eqref{ g OT} and $K(\rho_a,\rho_b)$ denotes the maximum value of \eqref{Kantorovich formula 1 } (or equivalently, \eqref{Kantorovich formula 2 }, \eqref{Kantorovich formula 3 }). Then $C(\rho_a,\rho_b) = K(\rho_a,\rho_b)$.
\end{theorem}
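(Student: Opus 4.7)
The plan is the classical two-step argument for Kantorovich duality: prove weak duality by a direct integration against any admissible coupling, then establish strong duality via Fenchel-Rockafellar convex duality on a space of bounded continuous functions. The lower-bound hypothesis \eqref{condition c low bdd} will be used to reduce to the case $c\ge 0$ so the variational setup is well-posed.

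For weak duality, I would fix any admissible coupling $\pi\in\Pi(\rho_a,\rho_b)$ and any dual-admissible pair $(\psi,\phi)$ satisfying $\phi(y)-\psi(x)\le c(x,y)$. Using that the marginals of $\pi$ are $\rho_a$ and $\rho_b$,
\begin{equation*}
\int \phi\,d\rho_b - \int \psi\,d\rho_a
= \int \bigl(\phi(y)-\psi(x)\bigr)\,d\pi(x,y)
\le \int c\,d\pi.
\end{equation*}
Taking supremum over admissible duals on the left and infimum over couplings on the right yields $K(\rho_a,\rho_b)\le C(\rho_a,\rho_b)$.

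For the reverse inequality, after replacing $c$ by $c-\ubar{c}\ge 0$ (which shifts both $K$ and $C$ by the same finite constant), I apply Fenchel-Rockafellar on the Banach space $E=C_b(\mathbb{R}^n\times\mathbb{R}^m)$. Define two convex lower-semicontinuous functionals on $E$: $\Theta(u)=0$ when $u+c\ge 0$ pointwise and $+\infty$ otherwise; and $\Xi(u)=\int\psi\,d\rho_a-\int\phi\,d\rho_b$ whenever $u$ admits a separable splitting $u(x,y)=\psi(x)-\phi(y)$ with $\psi\in C_b(\mathbb{R}^n)$, $\phi\in C_b(\mathbb{R}^m)$, and $+\infty$ otherwise. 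By construction, $\inf_E(\Theta+\Xi)=-K(\rho_a,\rho_b)$. Identifying the topological dual of $E$ with a suitable space of finite signed Radon measures via Riesz representation, a direct calculation of Legendre transforms gives $\Theta^*(-\pi)=\int c\,d\pi$ for nonnegative Radon $\pi$ (and $+\infty$ otherwise), while $\Xi^*(\pi)=0$ if $\pi$ has marginals $\rho_a,\rho_b$ (and $+\infty$ otherwise). Fenchel-Rockafellar then delivers the matching identity $-K(\rho_a,\rho_b)\ge -C(\rho_a,\rho_b)$, completing the proof.

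The main obstacle is verifying the topological hypothesis required by Fenchel-Rockafellar — namely that one of $\Theta,\Xi$ is continuous at a point where the other is finite — while handling the non-compactness of $\mathbb{R}^n\times\mathbb{R}^m$ and potential unboundedness of $c$. The clean workaround is a two-stage approximation. First treat the case where $c$ is bounded and continuous: there, $\Theta$ is identically zero in a sup-norm ball around any constant $u\equiv M>\sup c$ while $\Xi$ remains finite in that ball, so the continuity hypothesis is met and Fenchel-Rockafellar applies directly. Next, for general $c$ satisfying only \eqref{condition c low bdd} together with the regularity in \eqref{condition c a}, approximate $c$ from below by a monotone sequence $c_k:=\min(c,k)\wedge$ (a continuous cutoff in $(x,y)$) which is bounded continuous, apply the identity at each level, and pass to the limit. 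Monotone convergence handles the primal side, tightness of $\Pi(\rho_a,\rho_b)$ from Prokhorov applied to the fixed marginals guarantees a subsequential limit coupling, and the monotone increase of the optimal dual value gives the matching passage on the dual side, yielding $C(\rho_a,\rho_b)=K(\rho_a,\rho_b)$ in the asserted generality.
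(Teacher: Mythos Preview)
The paper does not actually prove this theorem; it simply states that ``the proof for a more general version can be found in Theorem 5.10 of \cite{villani2008optimal}.'' So there is no in-paper argument to compare against, and your outline is in the spirit of the classical Fenchel--Rockafellar route that Villani presents (in his earlier \emph{Topics in Optimal Transportation}).

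That said, there is a genuine gap in your sketch. You write that the topological dual of $E=C_b(\mathbb{R}^n\times\mathbb{R}^m)$ can be identified ``with a suitable space of finite signed Radon measures via Riesz representation.'' This is false on a non-compact space: the dual of $C_b$ with the sup norm is the space of bounded \emph{finitely additive} regular set functions (the ba space), which strictly contains the countably additive Radon measures. Consequently, when you compute $\Theta^*$ and $\Xi^*$ and invoke Fenchel--Rockafellar, the maximizing $\pi$ produced by the theorem need not be a genuine (countably additive) probability measure, so you cannot immediately conclude it lies in $\Pi(\rho_a,\rho_b)$ or that $\int c\,d\pi$ has the intended meaning. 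Your two-stage truncation of $c$ addresses unboundedness of the cost but does nothing for this dual-space issue, which is present already in the bounded-continuous case.

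The standard repairs are either (i) to run Fenchel--Rockafellar on a compact space first (e.g.\ restrict to growing compacts $K_R\times K_R'$ carrying most of the mass of $\rho_a,\rho_b$, or pass to a compactification) and then take a limit using tightness, or (ii) to work with the finitely additive dual and show a posteriori that the optimal element must be countably additive because its marginals $\rho_a,\rho_b$ are. Either fix is routine but must be stated; as written, the identification step is the point where the argument would fail. You also invoke \eqref{condition c a} in the approximation, whereas the theorem as stated assumes only \eqref{condition c low bdd}; if you need continuity of $c$ for the cutoff to remain continuous, say so explicitly or replace the pointwise minimum with an inf-convolution regularization.
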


The following result states the existence and uniqueness of the optimal solution to the Monge problem. It also reveals the relation between the optimal map $T_*$ of \eqref{Monge problem} and the optimal transport plan $\pi_*$ of \eqref{ g OT}. It is a simplified version of Theorem 10.28 combined with Remark 10.33 taken from \cite{villani2008optimal}.

\begin{customthm}{1}[Existence, uniqueness and characterization of  the optimal Monge map]
  Suppose the cost $c$ satisfies \eqref{condition c low bdd}, \eqref{condition c a}, \eqref{condition c b}, we further assume that $\rho_a$ and $\rho_b$ are compactly supported and $\rho_a$ is absolute continuous with respect to the Lebesgue measure on $\mathbb{R}^n$. Then there exists unique-in-law transport map $T_*$ solving the Monge problem \eqref{Monge problem}. And the joint distribution $(\textrm{Id}, T_*)_\sharp\rho_a$ on $\mathbb{R}^{n+m}$ is the optimal transport plan of the general OT problem \eqref{ g OT}. Furthermore, there exists $\psi_*$ differentiable $\rho_a$ almost surely\footnote{$\psi_*$ is differentiable at $x$ for all $x\in\textrm{Spt}(\rho_a)\textbackslash E_0$, where $E_0$ is a zero measure set.}, and $\nabla\psi_*(x) + \partial_x c(x,T_*(x)) = 0$, $\rho_a$ almost surely.

\end{customthm}

\section{Relation between our method and generative adversarial networks}\label{rel GAN Monge}
It is worth pointing out that our scheme and Wasserstein Generative Adversarial Networks (WGAN) \cite{wgan} are similar in the sense that they are both doing minimization over the generator/map and maximization over the discriminator/dual potential. However, there are two main distinctions between them. Such differences are not reflected from the superficial aspects such as the choice of reference distributions $\rho_a$, but come from the fundamental logic hidden behind the algorithms.
\begin{itemize}
  \item We want to first emphasize that the  mechanisms of two algorithms are different:
        Typical Wasserstein GANs (WGAN) are usually  formulated as
        \begin{equation}
          \min_{G} \underbrace{\max_{\|D\|_{\textrm{Lip}\leq 1}} \int D(y)\rho_b(y) dy - \int D(G(x))\rho_a(x) dx}_{1-\textrm{Wasserstein distance } W_1(G_\sharp \rho_a,\rho_b) } \label{diff Monge Gan - Gan}
        \end{equation}
        and ours reads
        \begin{equation}
          \underbrace{\max_{f} \min_{T} \int f(y)\rho_b(y) dy - \int  f(T(x))\rho_a(x) dx +  \int  c(X,T(x))\rho_a(x) dx}_{\textrm{general Wasserstein distance } C_{\textrm{Monge}}(\rho_a,\rho_b)}  \label{diff Monge Gan - Monge}
        \end{equation}
        The inner maximization of \eqref{diff Monge Gan - Gan} computes $W_1$ distance via Kantorovich duality and the outer loop minimize the $W_1$ gap between desired $\rho_b$ and $G_\sharp \rho_a$; However, the logic behind our scheme \eqref{diff Monge Gan - Monge} is different: the inner optimization computes for the $c-$transform of $f$, i.e. $f^{c,-}(x) = \sup_{\xi}(f(\xi)-c(x,\xi))$; And the outer maximization computes for the Kantorovich dual problem $C(\rho_a,\rho_b)=\sup_f\left\{\int f(y)\rho_b(y) dy - \int f^{c,-}(x)\rho_a(x)dx\right\}$.

        Even under $W_1$ circumstance, one can verify the intrinsic difference between two proposed methods: when setting the cost $c(x,y)=\|x-y\|$,
        and $\rho_a=G\sharp \rho_a$ in \eqref{diff Monge Gan - Monge},
        the entire "max-min" optimization of \eqref{diff Monge Gan - Monge} (underbraced part) is equivalent to the inner maximization problem of \eqref{diff Monge Gan - Gan} (underbraced part), but not for the entire saddle point scheme.

        It is also important to note that WGAN aims to minimize the distance between generated distribution and the target distribution and the ideal value for \eqref{diff Monge Gan - Gan} is $0$. On the other hand, one of our goal is to estimate the optimal transport distance between the initial distribution $\rho_a$ and the target distribution $\rho_b$. Thus the ideal value for \eqref{diff Monge Gan - Monge} should be $C(\rho_a,\rho_b)$, which is not $0$ in most of the cases.

  \item  We then argue about the optimality of the computed map $G$ and $T$: In \eqref{diff Monge Gan - Gan}, one is trying to obtain a map $G$ by minimizing $W_1(\rho_b, G_\sharp \rho_a)$ w.r.t. $G$, and hopefully, $G_\sharp\rho_a$ can approximate $\rho_b$ well. However, there isn't any restriction exerted on $G$, thus one can not expect the computed $G$ to be the optimal transport map between $\rho_a$ and $\rho_b$; On the other hand, in \eqref{diff Monge Gan - Monge}, we not only compute $T$ such that $T_\sharp \rho_a$ approximates $\rho_b$ , but also compute for the optimal $T$ that minimizes the transport cost $\mathbb{E}_{\rho_a } [c(X,T(X))]$. In \eqref{diff Monge Gan - Monge}, the computation of $T$ is naturally incorporated in the max-min scheme and there exists theoretical result (recall Theorem \ref{thm consistent} in the paper) that guarantees $T$ to be the optimal transport map.
\end{itemize}
In summary, even though the formulation of both algorithms are similar, the designing logic (minimizing distance vs computing distance itself) and the purposes (computing arbitary pushforward map vs computing the optimal map) of the two methods are distinct. Thus the theoretical and empirical study of GANs cannot be trivially translated to proposed method. In addition to the above discussions, we should also refer the readers to \cite{gazdieva2022unpaired}, in which a comparison between a similar saddle point method and the regularized GANs are made in section 6.2 and summarized in Table 1.

\section{Proof of
  Consistency
 }\label{proof thm consistency}
We firstly prove the following result on the consistency of our proposed method.
\begin{customthm}{2}[Consistency with $\bar{T}_\sharp\rho_a=\rho_b$]
  Suppose the max-min problem \eqref{max - min} admits at least one saddle point solution. We denote it as $(\bar{T},\bar{f})$. Under the assumption $\bar{T}_{\sharp} \rho_a=\rho_b$, we have
  \begin{itemize}
    \item $\bar{f}$ is an optimal solution $\phi$ to the Kantorovich dual problem \eqref{Kantorovich formula 1 }. Or equivalently, $\bar{f}$ is an optimal solution to \eqref{Kantorovich formula 3 };
    \item The optimal solution to the Monge problem \eqref{Monge problem} exists, and $\bar{T}$ is an optimal solution, i.e. $\bar{T}$ is the Monge map.
    \item Furthermore, $\mathcal{L}(\bar{T}, \bar{f})=C_{\textrm{Monge}}(\rho_a, \rho_b)$.
  \end{itemize}
\end{customthm}
\begin{proof}[Proof of Theorem \ref{thm consistent}]
  First one can verify the following
  \begin{align}
    \inf_T \mathcal{L}(T,f) = & -\int_{\mathbb{R}^n}  \sup_\xi\{ f(\xi)-c(x,\xi) \} \rho_a(x) dx + \int_{\mathbb{R}^m}f(y)\rho_b(y)dy \nonumber \\
    =                         & \int_{\mathbb{R}^m} f(y)\rho_b(y)dy - \int_{\mathbb{R}^n} f^{c,-}(x)\rho_a(x)dx,\label{compute}
  \end{align}
  Then the max-min problem $\sup_f\inf_T \mathcal{L}(T, f)$ can be formulated as
  \begin{equation*}
    \sup_f \left\{ \int_{\mathbb{R}^m} f(y)\rho_b(y)dy - \int_{\mathbb{R}^n} f^{c, -}(x)\rho_a(x)dx \right\}.
  \end{equation*}
  This is exactly the Kantorovich dual problem \eqref{Kantorovich formula 3 }. Since $(\bar{T}, \bar{f})$ is the saddle point, $\bar{f}$ is an optimal solution to \eqref{Kantorovich formula 3 }. This verifies the first assertion of the theorem.

  On the other hand, at the saddle point  $(\bar{T}, \bar{f})$, we have
  \begin{equation*}
    \quad \bar{T}(x) \in \textrm{argmax}_{\xi\in\mathbb{R}^m}\{\bar{f}(\xi)-c(x,\xi)\}, ~\rho_a ~\textrm{almost surely}.
  \end{equation*}
  This leads to
  \begin{equation*}
    f^{c,-}_*(x) = \bar{f}(\bar{T}(x))-c(x,\bar{T}(x)), ~\rho_a ~\textrm{almost surely}.
  \end{equation*}
  Then we have
  \begin{align*}
    \int_{\mathbb{R}^n} c(x,\bar{T}(x))\rho_a(x)~dx & = \int_{\mathbb{R}^n} \bar{f}(\bar{T}(x))\rho_a(x)~dx - \int_{\mathbb{R}^n} \bar{f}^{c,-}(x)\rho_a(x)~dx                                \\
                                                    & = \int_{\mathbb{R}^m} \bar{f}(y)\rho_b(y)~dy - \int_{\mathbb{R}^n} \bar{f}^{c,-}(x)\rho_a(x)~dx                                         \\
                                                    & = \int_{\mathbb{R}^n\times\mathbb{R}^m} [\bar{f}(y) - f^{c,-}_*(x)]d\pi(x,y) \leq \int_{\mathbb{R}^n\times\mathbb{R}^m} c(x,y)d\pi(x,y)
  \end{align*}
  for any $\pi\in\Pi(\rho_a,\rho_b)$. Here the second equality is due to the assumption $T_{*\sharp}\rho_a = \rho_b$. The last inequality is due to the definition of $f^{c,-}_*(x)=\sup_{y}\{\bar{f}(y) - c(x,y)\}$.

  We now take the infimum value of $\int_{\mathbb{R}^n\times\mathbb{R}^m} cd\pi$ and we obtain
  \begin{equation}
    \int_{\mathbb{R}^n} c(x,\bar{T}(x))\rho_a(x)~dx \leq C(\rho_a,\rho_b),\label{consistency equ 1}
  \end{equation}
  Now, for any transport map $T$ satisfying $T_\sharp\rho_a = \rho_b$, by denoting $\pi = (\textrm{Id}, T)_{\sharp}\rho_a$, we have
  \begin{equation}
    \int_{\mathbb{R}^n} c(x, T(x)) \rho_a(x)~dx = \int_{\mathbb{R}^n\times \mathbb{R}^m} c(x,y)d\pi(x,y) \geq C(\rho_a,\rho_b). \label{consistency equ 2}
  \end{equation}
  Combining \eqref{consistency equ 1} and \eqref{consistency equ 2}, we obtain
  \begin{equation*}
    \int_{\mathbb{R}^n} c(x, \bar{T}(x))\rho_a(x)dx \leq \int_{\mathbb{R}^n} c(x, T(x))\rho_a(x)dx, \quad \textrm{for any} ~ T, ~ T_\sharp \rho_a = \rho_b.
  \end{equation*}
  This indicates the existence of the Monge map, and $\bar{T}$ is the Monge map.

  At last, we have
  \begin{equation*}
    \mathcal{L}(\bar{T}, \bar{f}) = \int_{\mathbb{R}^n} c(x, \bar{T}(x))\rho_a(x)dx + \int_{\mathbb{R}^m} \bar{f}(y)(T_{*\sharp}\rho_a - \rho_b)dy =\int_{\mathbb{R}^n}c(x, \bar{T}(x))\rho_a(x)~dx = C_{\textrm{Monge}}(\rho_a, \rho_b).
  \end{equation*}
\end{proof}

Without the assumption $\bar{T}_\sharp\rho_a=\rho_b$, we no longer have guarantee on the existence of Monge map $T_*$, not to mention the consistency between $\bar{T}$ and $T_*$. However, we are still able to show the consistency between the saddle point value $\mathcal{L}(\bar{T}, \bar{f})$ and general OT distance $C(\rho_a,\rho_b)$.
\begin{customthm}{3}[Consistency without $\bar{T}_\sharp\rho_a=\rho_b$]
\end{customthm}
\begin{proof}
  The proof of the first part of this theorem is the same as Theorem \ref{thm consistent}. Thus it is not hard to verify
  \begin{equation}
    \sup_f\int_T \mathcal{L}(T, f) = \sup_f\left\{\int_{\mathbb{R}^m} f(y)\rho_b(y)dy - \int_{\mathbb{R}^n} f^{c,-}(x)\rho_a(x)dx \right\} = K(\rho_a,\rho_b).
  \end{equation}
  Recall Theorem \ref{thm: kan dual}, the optimal value of Kantorovich dual problem equals general OT distance $C(\rho_a,\rho_b)$. This proves our assertion.
\end{proof}

\begin{remark}
  Theorem \ref{thm weak consist} indicates that although for some  general cases in which our proposed method \eqref{max - min} fails to compute for a valid transport map $\bar{T}$, \eqref{max - min} is still able to recover the exact optimal transport distance $C(\rho_a, \rho_b)$.

  More specifically, let us consider the OT problem on $\mathbb{R}$ with $c(x,y)=|x-y|^2$ and $\rho_a=\delta_0$ (point distribution at $0$), $\rho_b=\mathcal{N}(0, 1)$ (normal distribution). Our method yields
  \begin{equation*}
    \sup_f\inf_T \left\{ |T(0)|^2 + \int f(y)\rho_b(y)dy - f(T(0)) \right\}.
  \end{equation*}
  By setting $\Psi_0(y) = |y|^2-f(y)$, our max-min problem becomes
  \begin{equation*}
    \sup_f \inf_T \{ \Psi_0(T(0)) + \int (|y|^2 - \Psi_0(y))\rho_b(y)dy  \} = \underbrace{\int |y|^2\rho_b(y)dy}_{=1} + \sup_{\Psi_0} \underbrace{\left\{\int [\inf \Psi_0 - \Psi_0(y)]\rho_b(y)dy\right\}}_{\leq 0} = 1
  \end{equation*}
  The supreme over $\Psi_0$ is obtained when $\Psi_0=\textrm{Const}$, i.e., when $f(y)=|y|^2+\textrm{Const}$.
  Thus we have $\sup_f\inf_T\mathcal{L}(T, f)=1=C(\rho_a,\rho_b)$. In such example, although the Monge map does not exist, the proposed method can still capture the exact OT distance.
\end{remark}

\section{Proof of Posterior Error Estimation}\label{proof thm err est}

Let us first recall the assumptions on the cost $c(\cdot,\cdot)$, marginals $\rho_a,\rho_b$, and dual variable $f$:

\begin{assumption}[on cost $c(\cdot, \cdot)$]
  We assume $c\in C^2(\mathbb{R}^d\times\mathbb{R}^d)$ is bounded from below. Furthermore, for any $x, y \in \mathbb{R}^d$, we assume $\partial_x c(x,y)$ is injective w.r.t. $y$; $\partial_{xy}c(x,y)$, as a $d\times d$ matrix, is invertible; and $\partial_{yy}c(x,y)$ is independent of $x$.
\end{assumption}

\begin{assumption}[on marginals $\rho_a$, $\rho_b$]
  We assume that $\rho_a,\rho_b$ are compactly supported on $\mathbb{R}^d$, and $\rho_a$ is absolutely continuous w.r.t. the Lebesgue measure.
\end{assumption}

\begin{assumption}[on dual variable $f$]
  Assume the dual variable $f\in C^2(\mathbb{R}^d)$ is always taken from $c$-concave functions, i.e., there exists certain $\varphi\in C^2(\mathbb{R}^d)$ such that $f(\cdot) = \inf_x\{\varphi(x)+c(x,\cdot)\}$ (c.f. Definition 5.7 of \cite{villani2008optimal}). Furthermore, we  assume that there exists a unique minimizer $x_y \in \textrm{argmin}_x\{\varphi(x) + c(x,y)\}$ for any $y\in \mathbb{R}^d$.
  And the Hessian of $\varphi(\cdot) + c(\cdot, y)$ at $x_y$ is positive definite.
\end{assumption}

We then introduce the following two notations. We denote
\begin{equation*}
  \sigma(x,y) = \sigma_{\min}(\partial_{xy}c(x,y))>0
\end{equation*}
as the minimum singular value of $\partial_{xy}c(x,y)$; and
\begin{equation*}
  \lambda(y) = \lambda_{\max}(\nabla^2_{xx}(\varphi(x)+c(x,y))|_{x=x_y})>0
\end{equation*}
as the maximum eigenvalue of the Hessian of $\varphi(\cdot)+c(\cdot,y)$ at $x_y$.

We denote the duality gaps as
\begin{align*}
   & \mathcal{E}_1(T, f) = \mathcal{L}(T,f) - \inf_{\widetilde{T}}\mathcal{L}(\widetilde{T},f),                                                                  \\
   & \mathcal{E}_2(f) = \sup_{\widetilde{f}} \inf_{\widetilde{T}} \mathcal{L}(\widetilde{T},\widetilde{f}) - \inf_{\widetilde{T}} \mathcal{L}(\widetilde{T},f) .
\end{align*}

It is not hard to verify the following lemma:
\begin{lemma}[Existence and uniqueness of Monge map under Assumption \ref{assmpt cost}, \ref{assmpt rho}]\label{lemma: unique monge map}
  Suppose Assumption \ref{assmpt cost} and \ref{assmpt rho} hold, then the conditions on $c(\cdot, \cdot)$ and $\rho_a, \rho_b$ mentioned in Theorem \ref{thm monge} are all satisfied. Thus the Monge map $T_*$ exists and is unique in law.
\end{lemma}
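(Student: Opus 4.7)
The plan is to show that each hypothesis of Theorem \ref{thm monge} is a direct consequence of either Assumption \ref{assmpt cost} or Assumption \ref{assmpt rho}, so the Lemma follows by a direct invocation. The verification is essentially bookkeeping, since the two assumption blocks were designed as a specialization of the hypotheses of Theorem \ref{thm monge}. I would proceed condition by condition.

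First, the marginal requirements (compact support of both $\rho_a,\rho_b$ and absolute continuity of $\rho_a$ with respect to the Lebesgue measure on $\mathbb{R}^d$) are stated verbatim by Assumption \ref{assmpt rho}, so they transfer at once. Next, among the three cost conditions required, the lower bound \eqref{condition c low bdd} and the injectivity of $\partial_x c(x,\cdot)$ in $y$ demanded by \eqref{condition c b} are both explicitly listed in Assumption \ref{assmpt cost}; these also transfer immediately. I note that the remaining parts of Assumption \ref{assmpt cost} — invertibility of $\partial_{xy} c$ and independence of $\partial_{yy} c$ from $x$ — play no role for this Lemma and are reserved for the posterior error analysis that follows.

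The only genuine piece of work is to verify \eqref{condition c a}: local Lipschitzness and superdifferentiability of $c$ at every point. I plan to extract both from the $C^2$ regularity posited in Assumption \ref{assmpt cost}. Local Lipschitzness is standard: since $\nabla c$ is continuous on $\mathbb{R}^d\times\mathbb{R}^d$, on any compact neighborhood it is bounded, and the mean-value inequality then yields a uniform Lipschitz constant there. For superdifferentiability at a point $z_0=(x_0,y_0)$, the first-order Taylor expansion gives $c(z) = c(z_0) + \langle \nabla c(z_0),\, z-z_0\rangle + R(z)$ with $R(z) = o(|z-z_0|)$, which verifies the definition directly with supergradient $p = \nabla c(z_0)$, since the remainder itself belongs to the $o(\cdot)$ class that the inequality permits. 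There is no real obstacle in this proof — the whole argument is a pointwise check of the hypotheses against the assumptions, with the mildest analytic content being the passage from $C^1$ smoothness to local Lipschitzness and superdifferentiability. Once all five hypotheses of Theorem \ref{thm monge} are in hand, the existence and uniqueness-in-law of $T_*$ follow at once.
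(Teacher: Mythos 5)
Your proof is correct and takes exactly the route the paper implicitly intends: the paper merely remarks ``It is not hard to verify the following lemma'' and gives no written argument, so a condition-by-condition check against the hypotheses of Theorem~\ref{thm monge} is precisely what is called for. Your handling of the only nontrivial step, that $c\in C^2$ forces \eqref{condition c a} (local Lipschitzness via boundedness of $\nabla c$ on compacts, superdifferentiability via the first-order Taylor expansion with the remainder absorbed into the $o(\cdot)$ term), is sound, and you correctly observe that the invertibility of $\partial_{xy}c$ and the $x$-independence of $\partial_{yy}c$ are not used for this lemma.
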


We now state the main theorem on error estimation:
\begin{customthm}{4}[Posterior Error Enstimation via Duality Gaps]
  Suppose Assumption \ref{assmpt cost}, \ref{assmpt rho} and \ref{assmpt dual varb} hold. Let us further assume the max-min problem \eqref{max - min} admits a saddle point $(\bar{f}, \bar{T})$ that is consistent with the Monge problem, i.e. $\bar{T}$ equals $T_*$, $\rho_a$ almost surely. Then there exists a strict positive weight function $\beta(x) > \underset{y\in\mathbb{R}^m}{\min}\left\{\frac{\sigma(x,y)}{2\lambda(y)}\right\}$ such that the weighted $L^2$ error between computed map $T$ and the Monge map $T_*$ is upper bounded by
  \begin{equation*}
    \|T-T_*\|_{L^2(\beta\rho_a)} \leq \sqrt{2(\mathcal{E}_1(T, f) + \mathcal{E}_2(f))}.
  \end{equation*}
\end{customthm}

To prove Theorem \ref{thmerror est}, we need the following two lemmas:
\begin{lemma}\label{thm matrix pos def}
  Suppose $n\times n$ matrix $A$ is invertible with minimum singular value $\sigma_{\min}(A)>0$. Also assume $n\times n$ matrix $H$ is self-adjoint and satisfies $\lambda I_n \succeq H\succ  O_n $\footnote{Here matrix $M_1 \succ M_2$ iff $M_2 - M_1$ is a positive-definite matrix, and $M_1 \succeq M_2  $ iff $M_1 - M_2$ is a positive-semidefinite matrix.}. Then $A^\top H^{-1}A  \succeq \frac{\sigma_{\min}(A)^2}{\lambda}I_n$.
\end{lemma}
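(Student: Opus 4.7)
The plan is to reduce the matrix inequality to two elementary operator bounds and combine them by a change of variable. First I would show that the hypothesis $\lambda I_n \succeq H \succ O_n$ implies the dual bound $H^{-1} \succeq \frac{1}{\lambda} I_n$. This is the classical operator-monotonicity property of the map $X \mapsto X^{-1}$ on positive-definite matrices, and can be verified directly by diagonalizing $H$: if $H = U\,\mathrm{diag}(\mu_1,\ldots,\mu_n)\,U^\top$ with $0 < \mu_i \leq \lambda$, then $H^{-1} = U\,\mathrm{diag}(\mu_1^{-1},\ldots,\mu_n^{-1})\,U^\top \succeq \frac{1}{\lambda} I_n$.

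Next, for an arbitrary $v \in \mathbb{R}^n$, I would substitute $u = Av$ and write
\begin{equation*}
    v^\top A^\top H^{-1} A v = u^\top H^{-1} u \;\geq\; \frac{1}{\lambda} \|u\|^2 = \frac{1}{\lambda}\|Av\|^2,
\end{equation*}
using the inequality from the previous paragraph. The definition of the minimum singular value of $A$ gives $\|Av\|^2 \geq \sigma_{\min}(A)^2 \|v\|^2$ for every $v \in \mathbb{R}^n$, so chaining these two bounds yields
\begin{equation*}
    v^\top A^\top H^{-1} A v \;\geq\; \frac{\sigma_{\min}(A)^2}{\lambda} \|v\|^2,
\end{equation*}
which is exactly the asserted inequality $A^\top H^{-1} A \succeq \frac{\sigma_{\min}(A)^2}{\lambda} I_n$.

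There is no real obstacle here: invertibility of $A$ is used only to know that $\sigma_{\min}(A) > 0$ (so the bound is nontrivial), and strict positive definiteness of $H$ ensures $H^{-1}$ is well-defined. The only step that deserves a one-line justification is the passage from $H \preceq \lambda I_n$ to $H^{-1} \succeq \frac{1}{\lambda} I_n$, which I would handle by the diagonalization argument above rather than invoking general operator monotonicity, to keep the proof self-contained.
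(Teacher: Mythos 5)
Your proof is correct and takes essentially the same approach as the paper: first pass from $H \preceq \lambda I_n$ to $H^{-1} \succeq \frac{1}{\lambda} I_n$ via diagonalization, then bound $v^\top A^\top H^{-1} A v$ from below by chaining this with $\|Av\|^2 \geq \sigma_{\min}(A)^2 \|v\|^2$. The only cosmetic difference is that you diagonalize $H$ while the paper diagonalizes $H^{-1}$, which amounts to the same computation.
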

\begin{proof}[Proof of Lemma \ref{thm matrix pos def} ]
  One can first verify that $H^{-1}\succeq\frac{1}{\lambda}I_n$ by digonalizing $H^{-1}$. To prove this lemma, we only need to verify that for arbitrary $v\in\mathbb{R}^n$,
  \begin{align*}
    v^\top A^\top H^{-1}Av = (Av)^\top H^{-1}Av  \geq \frac{|Av|^2}{\lambda} \geq \frac{\sigma_{\min}(A)^2}{\lambda}|v|^2
  \end{align*}
  Thus $A^\top H^{-1}A-\frac{\sigma_{\min}(A)^2}{\lambda}I_n$ is positive-semidefinite.
\end{proof}

The following lemma is crucial for proving our results, it  analyzes the concavity of the target function $f(\cdot)-c(\cdot,y)$ with $f$ is $c$-concave.
\begin{lemma}[Concavity of $f(\cdot)-c(x,\cdot)$ if $f$ $c$-concave]\label{concavity lemm}
  Suppose the cost function $c(x,y)$ and $f$ satisfy the conditions mentioned in Theorem \ref{thmerror est}. Denote the function $\Psi_x(y) = f(y)- c(x,y)$, keep all notations defined in Theorem \ref{thmerror est},  then we have
  \begin{equation*}
    \nabla^2\Psi_x(y)  \preceq  -\frac{\sigma(x,y)^2}{\lambda(y)}I_n.
  \end{equation*}
\end{lemma}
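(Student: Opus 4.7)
The plan is to exploit the $c$-concavity of $f$ together with the envelope / implicit function theorem machinery to compute $\nabla^2 f(y)$ explicitly, then observe that the cross term $\partial_{yy} c(x,y)$ in $\nabla^2 \Psi_x(y) = \nabla^2 f(y) - \partial_{yy} c(x,y)$ cancels cleanly thanks to the $x$-independence of $\partial_{yy}c$ (Assumption~\ref{assmpt cost}), leaving a quadratic form in $\partial_{xy} c$ and $H^{-1}$ to which Lemma~\ref{thm matrix pos def} applies.

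More concretely, since Assumption~\ref{assmpt dual varb} writes $f(y) = \inf_x\{\varphi(x) + c(x,y)\}$ with a unique minimizer $x_y$, I would first record the first-order condition $\partial_x\varphi(x_y) + \partial_x c(x_y, y) = 0$ and then differentiate it implicitly in $y$. Setting $H(y) := \nabla^2_{xx}(\varphi(x) + c(x,y))|_{x=x_y}$, which is positive definite (hence invertible) by Assumption~\ref{assmpt dual varb}, this yields
\begin{equation*}
\partial_y x_y = -H(y)^{-1}\,\partial_{xy} c(x_y, y).
\end{equation*}
An envelope-type calculation then gives $\nabla f(y) = \partial_y c(x_y, y)$, and a further differentiation produces
\begin{equation*}
\nabla^2 f(y) = \partial_{yy} c(x_y, y) + \partial_{yx} c(x_y, y)\,\partial_y x_y = \partial_{yy} c(x_y, y) - \partial_{yx} c(x_y, y)\,H(y)^{-1}\,\partial_{xy} c(x_y, y).
\end{equation*}

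Now I would invoke the crucial hypothesis from Assumption~\ref{assmpt cost} that $\partial_{yy} c(x,y)$ does not depend on $x$, so that $\partial_{yy} c(x_y, y) = \partial_{yy} c(x, y)$. Subtracting, the Hessian of $\Psi_x$ simplifies to
\begin{equation*}
\nabla^2 \Psi_x(y) = -\,\partial_{yx} c(x_y, y)\,H(y)^{-1}\,\partial_{xy} c(x_y, y).
\end{equation*}
Finally, applying Lemma~\ref{thm matrix pos def} with $A = \partial_{xy} c(x_y, y)$ (invertible by Assumption~\ref{assmpt cost}) and the bound $\lambda(y)\, I_n \succeq H(y) \succ 0$ delivers $A^\top H(y)^{-1} A \succeq \tfrac{\sigma_{\min}(A)^2}{\lambda(y)} I_n$, which yields the desired operator inequality.

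The main technical obstacle is not the chain of calculations, which is routine once set up, but ensuring that the implicit differentiation is valid; that requires $H(y)$ invertible (given), the uniqueness of $x_y$ (given), and enough regularity of $c$ and $\varphi$ (both $C^2$) to apply the implicit function theorem in a neighborhood of $y$. A secondary point worth flagging is the notational nuance that my derivation most naturally produces $\sigma(x_y, y)$ in the denominator rather than $\sigma(x, y)$; this is consistent with how the weight $\beta(x)$ is built in Theorem~\ref{thmerror est}, since $\beta(x)$ is defined via a minimization over $y$ and absorbs the difference, but I would want to state the intermediate bound carefully to track this dependence.
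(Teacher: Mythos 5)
Your proposal reproduces the paper's argument step for step: the first-order condition at the minimizer $x_y$, the implicit function theorem to obtain $\partial_y x_y = -H(y)^{-1}\partial_{xy}c(x_y,y)$, the envelope theorem to get $\nabla f(y)=\partial_y c(x_y,y)$, the cancellation of $\partial_{yy}c$ via the $x$-independence in Assumption~\ref{assmpt cost}, and finally Lemma~\ref{thm matrix pos def} to close the bound. Your observation that the derivation naturally yields $\sigma(x_y,y)$ rather than $\sigma(x,y)$ is a genuine and sharper reading than the paper's own write-up, which silently substitutes $\sigma(x,y)$ at the last line despite the quadratic form involving $\partial_{xy}c(x_y,y)$; you are right that the $y$-minimization built into the weight $\beta$ in Theorem~\ref{thmerror est} is what ultimately absorbs this distinction.
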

\begin{proof}[Proof of Lemma \ref{concavity lemm}]

  First, we notice that $f$ is $c$-convex, thus, there exists $\varphi$ such that $f(y) = \inf_x\{\varphi(x)+c(x,y)\}$. Let us also denote $\Phi(x,y)=\varphi(x)+c(x,y)$.

  Now for a fixed $y\in\mathbb{R}^n$, We pick one
  \begin{equation*}
    x_y \in \textrm{argmin}_{x}\left\{ \varphi(x) +c(x,y) \right\}
  \end{equation*}
  Since we assumed that $\varphi\in C^2(\mathbb{R}^n)$ and $c\in C^2(\mathbb{R}^n\times\mathbb{R}^n)$, we have
  \begin{equation}
    \partial_x \Phi(x_y, y) = \nabla\varphi(x_y) + \partial_x c(x_y,y) = 0  \label{formu_1}
  \end{equation}
  Now recall Assumption \ref{assmpt dual varb}, %
  $\partial_{xx}^2\Phi(x_y, y)$ is positive definite, thus is also invertible. We can now apply the implicit function theorem to show that the equation $\partial_x\Phi(x,y)=0$ determines an implicit function $x(\cdot)$, which satisfies $x(y)=x_y$ in a small neighbourhood $U\subset\mathbb{R}^n$ containing $y$. Furthermore, one can show that $x(\cdot)$ is continuously differentiable at $y$. We will denote $x_y$ as $x(y)$ in our following discussion.

  Now differentiating \eqref{formu_1} with respect to $y$ yields
  \begin{align}
     & \partial_{xx}^2\Phi(x(y), y)\nabla x(y) + \partial_{xy}^2 c(x(y),y) = 0 \label{formu_2}
  \end{align}
  On one hand, \eqref{formu_2} tells us
  \begin{equation}
    \nabla x(y) = -\partial_{xx}\Phi(x(y), y)^{-1}\partial_{xy}c(x(y), y).  \label{implicit df}
  \end{equation}
  Now we directly compute
  \begin{equation}
    \nabla^2\Psi_x(y) = \nabla^2 f(y) - \partial_{yy}^2c(x,y). \label{hessian Psi_x}
  \end{equation}
  in order to compute $\nabla^2 f(y)$, we first compute $\nabla f(y)$
  \begin{equation}
    \nabla f(y) = \nabla (\varphi(x(y)) + c(x(y), y)) = \partial_y c(x(y), y).
  \end{equation}
  the second equality is due to the envelope theorem \cite{afriat1971theory}. Then $\nabla^2 f(y)$ can be computed as
  \begin{equation}
    \nabla^2 f(y) = \partial_{yx} c(x(y), y)\nabla x(y) + \partial_{yy} c(x(y), y).  \label{monge proof nabla2f}
  \end{equation}

  Plugging \eqref{implicit df} into \eqref{monge proof nabla2f}, recall \eqref{hessian Psi_x}, this yields
  \begin{equation*}
    \nabla^2 \Psi_x(y) = -\partial_{yx} c(x(y), y)\partial_{xx}\Phi(x(y), y)^{-1}\partial_{xy}c(x(y), y) + \partial_{yy}^2c(x(y), y) - \partial_{yy}^2c(x,y).
  \end{equation*}
  Recall the Assumption \ref{assmpt cost}, notice that $c\in C^2(\mathbb{R}^n\times\mathbb{R}^n)$, by symmetry of second derivatives, one can verify $\partial_{xy}c^\top = \partial_{yx}c$; Since $\partial_{yy}c(x,y)$ is independent of $x$, one has $\partial_{yy}^2c(x(y), y) - \partial_{yy}^2c(x,y)=0$. Thus we obtain
  \begin{equation}
    \nabla^2\Psi_x(y) = -\partial_{xy}^\top c(x(y), y)\partial_{xx}\Phi(x(y), y)^{-1}\partial_{xy}c(x(y), y).  \label{hessian of psi}
  \end{equation}
  By \eqref{boundness of Phi}, $\lambda(y) I_n \succeq \partial_{xx}\Phi(x(y), y)  \succ  O_n$.
  Recall that  $\sigma_{\min}(\partial_{xy}c(x,y))=\sigma(x,y)$. Now we apply lemma \ref{thm matrix pos def} to \eqref{hessian of psi}, this yields
  \begin{equation*}
    \nabla^2\Psi_x(y) \preceq  -\frac{\sigma(x,y)^2}{\lambda(y)}I_n.
  \end{equation*}

\end{proof}

Now we prove the main result of  Theorem \ref{thmerror est}:
\begin{proof}[Proof of Theorem \ref{thmerror est}]

  In this  proof, we   denote $\int$ as $\int_{\mathbb{R}^d}$ for simplicity.

  We first recall
  \begin{align*}
    \mathcal{L}(T,f) = & \int c(x, T(x))\rho_a(x)~dx + \int f(y)\rho_b(y)~dy - \int f(T(x))\rho_a(x)~dx \\
    =                  & \int f(y)\rho_b(y)~dy - \int (f(T(x)) - c(x,T(x)))\rho_a(x)dx,
  \end{align*}
  then we write
  \begin{align*}
    \mathcal{E}_1(T,f)  = \mathcal{L}(T,f) - \inf_{\widetilde{T}}\mathcal{L}(\widetilde{T}, f) = -\int [f(T(x)) - c(x,T(x))]\rho_a~dx + \sup_{\widetilde{T}} \left\{\int[f(\widetilde{T}(x)) - c(x,\widetilde{T}(x))]\rho_a~dx\right\}
  \end{align*}
  We denote
  \begin{equation}
    T_f(x) = \textrm{argmax}_{y}\{f(y)-c(x,y)\} = \textrm{argmax}_y\{\Psi_x(y)\},  \label{def Tf}
  \end{equation}
  recall that we denote $\Psi_x(y) = f(y)- c(x,y)$, then we have
  \begin{equation}
    \nabla\Psi_x(T_f(x)) = 0. \label{critical pt}
  \end{equation}
  One can also write:
  \begin{align*}
    \mathcal{E}_1(T,f) = & \int [(f(T_f(x))-c(x,T_f(x))) - ( f(T(x)) - c(x,T(x)) )] \\
    =                    & \int [\Psi_x(T_f(x))-\Psi_x(T(x))]\rho_a(x)~dx
  \end{align*}
  For any $x\in\mathbb{R}^d$, since $\Psi_x(\cdot)\in C^2(\mathbb{R}^n)$, and according to the previous Lemma \ref{concavity lemm}, we have
  \begin{equation*}
    \Psi_x(T(x)) - \Psi_x(T_f(x)) = \nabla\Psi_x(T_f(x))(T(x)-T_f(x)) + \frac{1}{2}(T(x)-T_f(x))^\top\nabla^2\Psi_x(\omega(x))(T(x)-T_f(x))
  \end{equation*}
  with $\omega(x)=(1-\theta_x)T(x)+\theta_x T_f(x)$ for certain $\theta_x\in[0,1]$. By \eqref{critical pt} and Lemma \ref{concavity lemm}, we have
  \begin{equation*}
    \Psi_x(T(x)) - \Psi_x(T_f(x)) \leq -\frac{\sigma(x,\omega(x))^2}{2\lambda(\omega(x))}|T(x) - T_f(x)|^2.
  \end{equation*}
  Thus we have:
  \begin{equation}
    \mathcal{E}_1(T,f) = \int [\Psi_x(T_f(x))-\Psi_x(T(x))]\rho_a(x)~dx \geq \int \frac{\sigma(x,\omega(x))^2}{2\lambda(\omega(x))}|T(x) - T_f(x)|^2\rho_a(x)~dx  \label{est 1}
  \end{equation}

  On the other hand, recall a saddle point solution of our max-min problem \eqref{max - min} is denoted as $(\bar{T}, \bar{f})$. Then we have %
  \begin{equation*}
    \sup_f\inf_T\mathcal{L}(T,f) = \int c(x,\bar{T}(x))\rho_a~dx - \int f(y)(\bar{T}_\sharp \rho_a - \rho_b)dy = \int c(x,\bar{T}(x))\rho_a~dx,
  \end{equation*}
  the second equality is due to the assumption on the consistency between $\bar{T}$ and Monge map $T_*$, thus $\bar{T}_\sharp \rho_a = \rho_b$.

  Thus we have
  \begin{align*}
    \mathcal{E}_2(f) = & \int c(x,\bar{T}(x))\rho_a~dx - \inf_{\widetilde{T}}\left(\int c(x, \widetilde{T}(x))\rho_a(x)dx +\int f(y)\rho_b(y)~dy - \int f(\widetilde{T}(x))\rho_a(x)dx\right) \\
    =                  & - \int f(\bar{T}(x)) - c(x,\bar{T}(x)) \rho_a~dx + \sup_{\widetilde{T}}\int (f(\widetilde{T}(x)) - c(x,\widetilde{T}(x)))\rho_a(x)dx.
  \end{align*}
  The second equality is due to $\bar{T}_\sharp\rho_a=\rho_b$. Similar to the previous treatment, we have
  \begin{equation*}
    \mathcal{E}_2(f) = \int [\Psi_x(T_f(x)) - \Psi_x(\bar{T}(x))]\rho_a(x)~dx
  \end{equation*}
  Apply similar analysis as before, we obtain
  \begin{equation}
    \mathcal{E}_2(f) \geq \int \frac{\sigma(x,\xi(x))^2}{2\lambda(\xi(x))}|\bar{T}(x) - T_f(x)|^2\rho_a(x)~dx   \label{est 2}
  \end{equation}
  with $\xi(x)=(1-\tau_x)\bar{T}(x)+ \tau_x T_f(x)  $ for certain $\tau_x\in[0,1]$. Since $\bar{T}=T_*$, $\rho_a$ almost surely, \eqref{est 2} leads to
  \begin{equation}
    \mathcal{E}_2(f) \geq \int \frac{\sigma(x,\xi(x))^2}{2\lambda(\xi(x))}|T_*(x) - T_f(x)|^2\rho_a(x)~dx   \label{est 2}
  \end{equation}

  Now we set
  \begin{equation}
    \beta(x) = \min\left\{\frac{\sigma(x,\omega(x))}{2\lambda(\omega(x))}, \frac{\sigma(x,\xi(x))}{2\lambda(\xi(x))}\right\} , \label{def beta}
  \end{equation}
  combining \eqref{est 1} and \eqref{est 2}, we obtain
  \begin{align*}
    \mathcal{E}_1(T,f) + \mathcal{E}_2(f) & \geq \int\beta(x)(|T(x)-T_f(x)|^2 + |T_*(x)-T_f(x)|^2)\rho_a~dx \\
                                          & \geq \int \frac{\beta(x)}{2}|T(x)-T_*(x)|^2\rho_a~dx
  \end{align*}
  This leads to $\|T-T_*\|_{L^2(\beta\rho_a)}\leq \sqrt{2(\mathcal{E}_1(T,f) + \mathcal{E}_2(f))}$.

\end{proof}

\section{Additional results}\label{add }

\subsection{Synthetic datasets}
\paragraph{Learning with unequal dimensions}
Our algorithm framework enjoys a distinguishing quality that it can learn the map from a lower dimension space $\mathbb{R}^{n}$ to a manifold in a higher dimension space $\mathbb{R}^{m} (n \leq m)$. In this scenario, we make the input dimension of neural network $T$ to be $n$ and output dimension to be $m$. In case the cost function $c(x,y)$  requires dimensions are $x$ and $y$ are equal dimensional, we patch zeros behind each sample $X \sim \rho_a$ and complement to a counterpart sample $\widetilde{X}=[X;\b0]$, where dimension of  $\b0$ is $m-n$.
And the targeted min-max problem is replaced by
\begin{align*}
  \max_\eta \min_\theta  \frac{1}{N} \sum_{k=1}^N c( {\widetilde{ X}_k}, T_\theta( {{ X}_k})) - f_\eta(T_\theta( {{ X}_k})) +f_\eta( {Y_k}).
\end{align*}
In Figure \ref{fig:unequal dim}, we conduct one experiment for $n=1$ and $m=2$.
The incomplete ellipse is a 1D manifold and our algorithm is able to learn a symmetric map from $\mathcal{N}(0,1)$ towards it.
\begin{figure}[h]
  \vspace{-0.4cm}
  \centering
  \begin{subfigure}{.2\textwidth}
    \centering
    \includegraphics[width=1\linewidth]{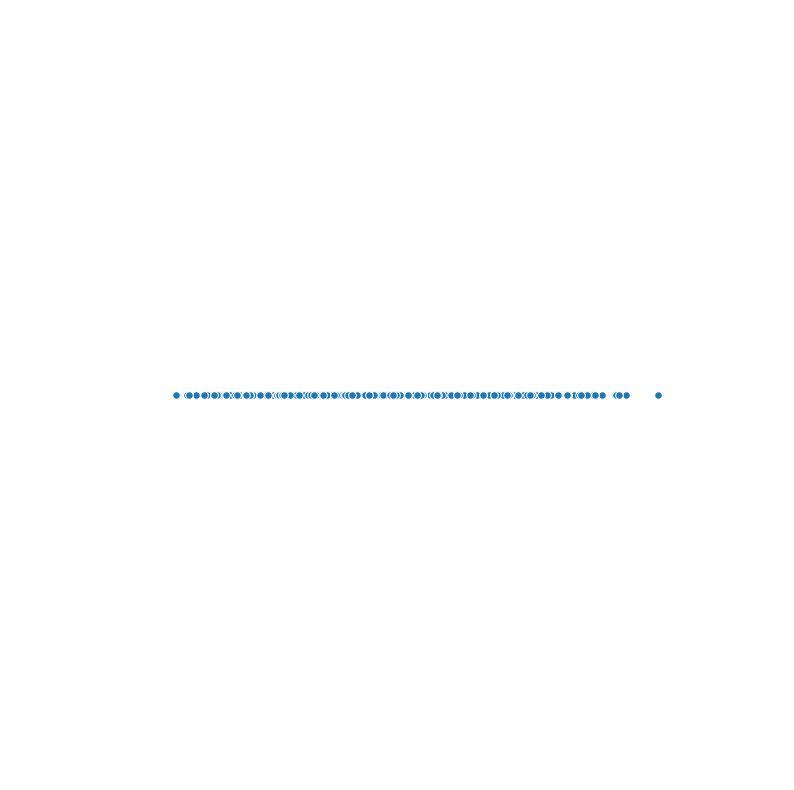}
    \caption{$\rho_a$}
  \end{subfigure}
  \begin{subfigure}{.2\textwidth}
    \centering
    \includegraphics[width=1\linewidth]{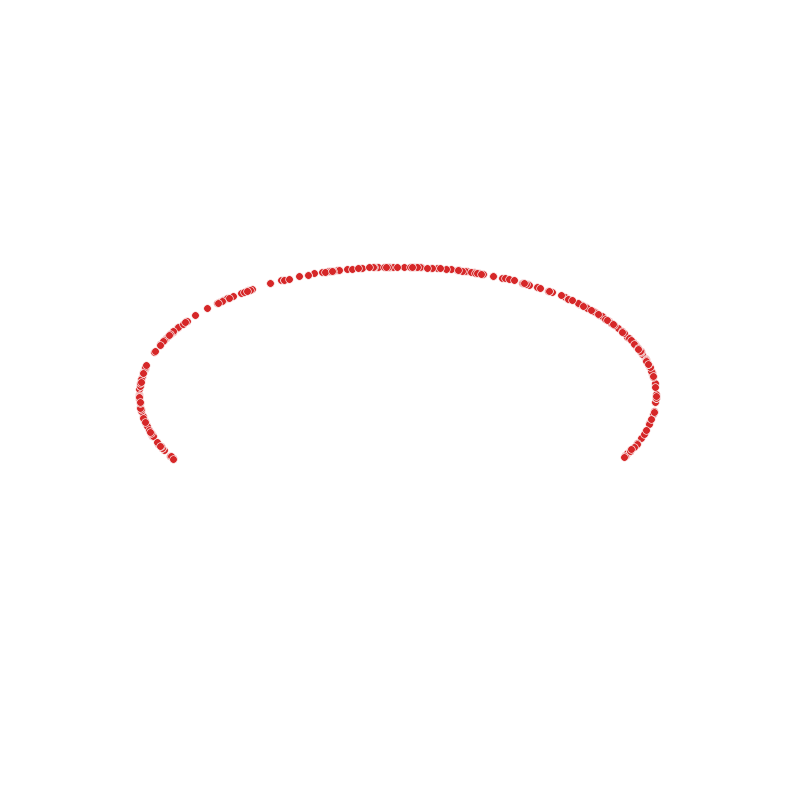}
    \caption{$\rho_b$}
  \end{subfigure}
  \begin{subfigure}{.2\textwidth}
    \centering
    \includegraphics[width=1\linewidth]{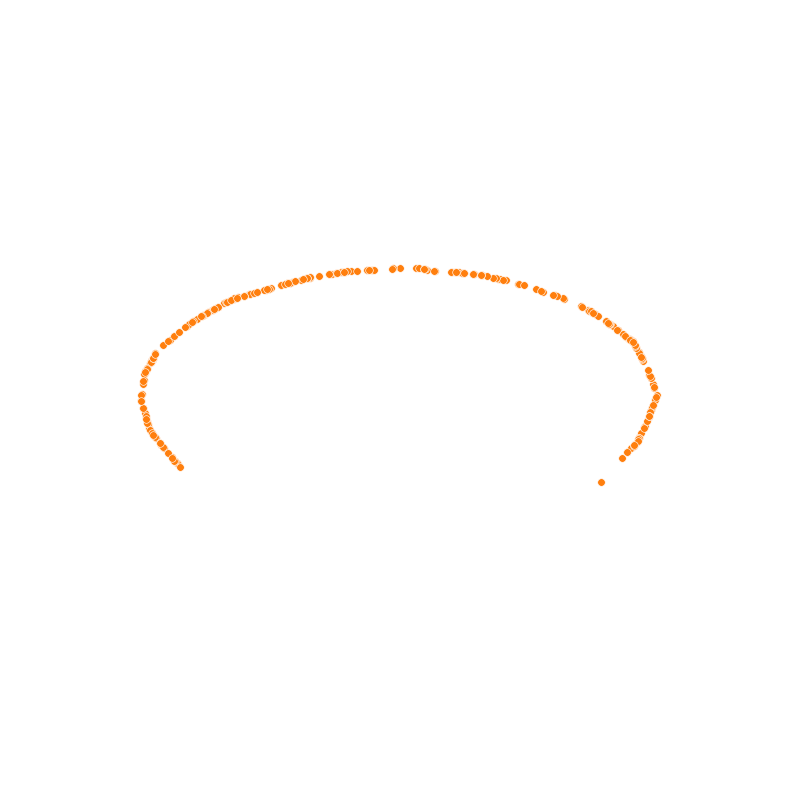}
    \caption{$T \sharp \rho_a$ }
  \end{subfigure}
  \begin{subfigure}{.2\textwidth}
    \centering
    \includegraphics[width=1\linewidth]{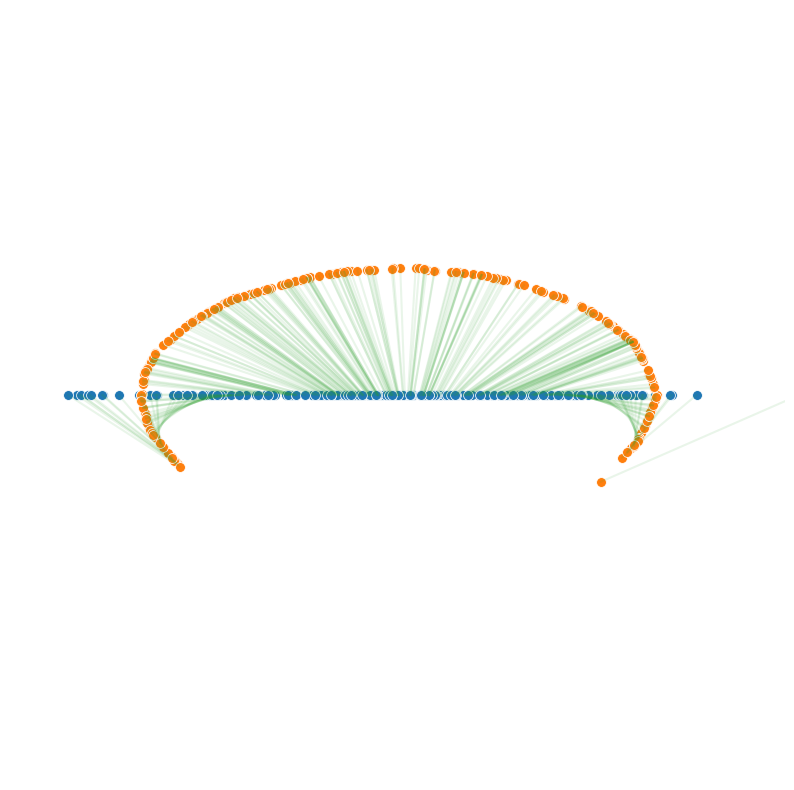}
    \caption{$T(\cdot)$ map }
  \end{subfigure}
  \caption{Qualitative results for learning unequal dimension maps. $\rho_a$ for two examples are both $\mathcal{N}(0,1)$, and $\rho_b$ are uniformly distributed on a incomplete ellipse and a ball respectively.}
  \label{fig:unequal dim}
  \vspace{-0.5cm}
\end{figure}

\paragraph{Decreasing function as the cost} We consider the cost function $c(x,y)=\phi(|x-y|)$ with $\phi$ as a monotonic decreasing function. We test our algorithm for a specific example $\phi(s)=\frac{1}{s^2}$. In this example, we compute for the optimal Monge map from $\rho_a$ to $\rho_b$ with $\rho_a$ as a uniform distribution on $\Omega_a$ and $\rho_b$ as a uniform distribution on $\Omega_b$, where we define
\begin{equation*}
  \Omega_a = \{(x_1,x_2)~|~ 6^2 \geq x_1^2+x_2^2 \geq 4^2 \},\quad \Omega_b = \{(x,x_2)~|~2^2\geq x_1^2+x_2^2 \geq 1^2 \}.
\end{equation*}
We also compute the same problem for $L^2$ cost. Figure \ref{fig:decrease cost} shows the transported samples as well as the differences between two cost functions.
\begin{figure}[h]
  \centering
  \begin{subfigure}{.186\textwidth}
    \centering
    \includegraphics[width=\linewidth]{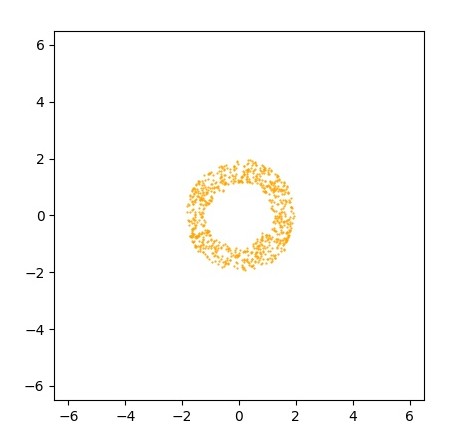}
    \caption{}
  \end{subfigure}
  \begin{subfigure}{.186\textwidth}
    \centering
    \includegraphics[width=1\linewidth]{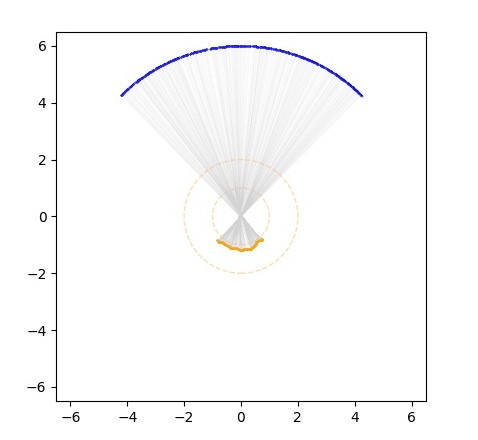}
    \caption{}
  \end{subfigure}
  \begin{subfigure}{.186\textwidth}
    \centering
    \includegraphics[width=1\linewidth]{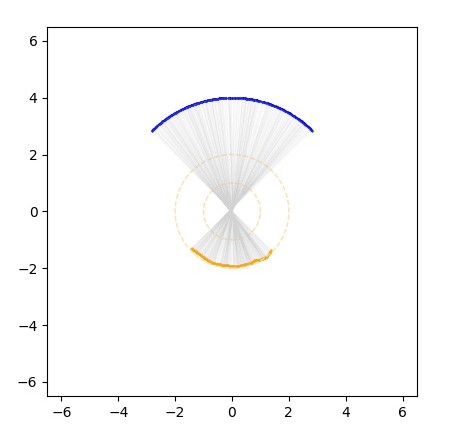}
    \caption{}
  \end{subfigure}
  \begin{subfigure}{.186\textwidth}
    \centering
    \includegraphics[width=1\linewidth]{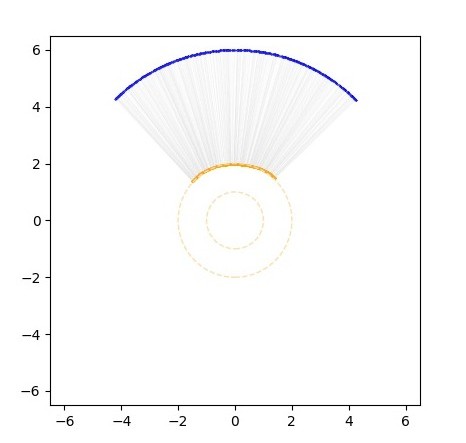}
    \caption{}
  \end{subfigure}
  \begin{subfigure}{.186\textwidth}
    \centering
    \includegraphics[width=1\linewidth]{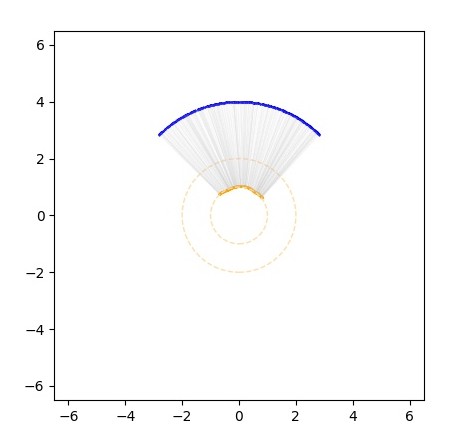}
    \caption{}
  \end{subfigure}
  \caption{(a) samples of computed $T_\sharp\rho_a$;  $c(x,y)=\frac{1}{|x-y|^2}$: Computed Monge map of quarter circles with radius $6$ (subplot b) and radius $4$ (subplot c);  $c(x,y)=|x-y|^2$: Computed Monge map of quarter circles with radius $6$ (subplot d) and radius $4$ (subplot e).}
  \label{fig:decrease cost}
\end{figure}

\paragraph{Uniform distribution on sphere
} For a given sphere $S$ with radius $R$, for any two points $x,y\in S$, we define the distance $d(x,y)$ as the length of the geodesic joining $x$ and $y$. Now for given $\rho_a$, $\rho_b$ defined on $S$, we consider solving the following Monge problem on $S$
\begin{equation}
  \min_{T,~T_\sharp\rho_a = \rho_b} \int_S d(x,T(x))\rho_a(x)~dx.  \label{spherical monge problem}
\end{equation}
Such sphere OT problem can be transferred to an OT problem defined on angular domain $D = [0, 2\pi)\times [0, \pi]$, to be more specific, we consider $(\theta, \phi)$ ($\theta\in[0,2\pi)$, $\phi\in [0,\pi]$) as the azimuthal and polar angle of the spherical coordinates. For two points $x=(R\sin\phi_1\cos\theta_1,R\sin\phi_1\sin\theta_1,R\cos\phi_1)$, $y=(R\sin\phi_2\cos\theta_2,R\sin\phi_2\sin\theta_2,R\cos\phi_2)$ on $S$, the geodesic distance
\begin{equation*}
  d(x,y) = c((\theta_1,\phi_1), (\theta_2,\phi_2)) = R\cdot\arccos(\sin\phi_1\sin\phi_2\cos(\theta_2-\theta_1) + \cos\phi_1\cos\phi_2).
\end{equation*}
Denote the corresponding distribution of $\rho_a,\rho_b$ on $D$ as $\hat{\rho}_a,\hat{\rho}_b$, now \eqref{spherical monge problem} can also be formulated as
\begin{equation}
  \min_{\hat{T},\hat{T}_{\sharp}\hat{\rho}_a = \hat{\rho}_b} \int c((\theta,\phi), \hat{T}(\theta,\phi))\hat{\rho}_a~d\theta d\phi.  \label{tran spherical monge probelm}
\end{equation}
We set $\hat{\rho}_a=U([0, 2\pi])\otimes U([0,\frac{\pi}{4}])$ and $\hat{\rho}_b = U([0, 2\pi])\otimes U([\frac{3\pi}{4},\pi])$. We apply our algorithm to solve \eqref{tran spherical monge probelm} and then translate our computed Monge map back to the sphere $S$ to obtain the following results

\begin{figure}[h]
  \centering
  \begin{subfigure}{.24\textwidth}
    \centering
    \includegraphics[width=1\linewidth]{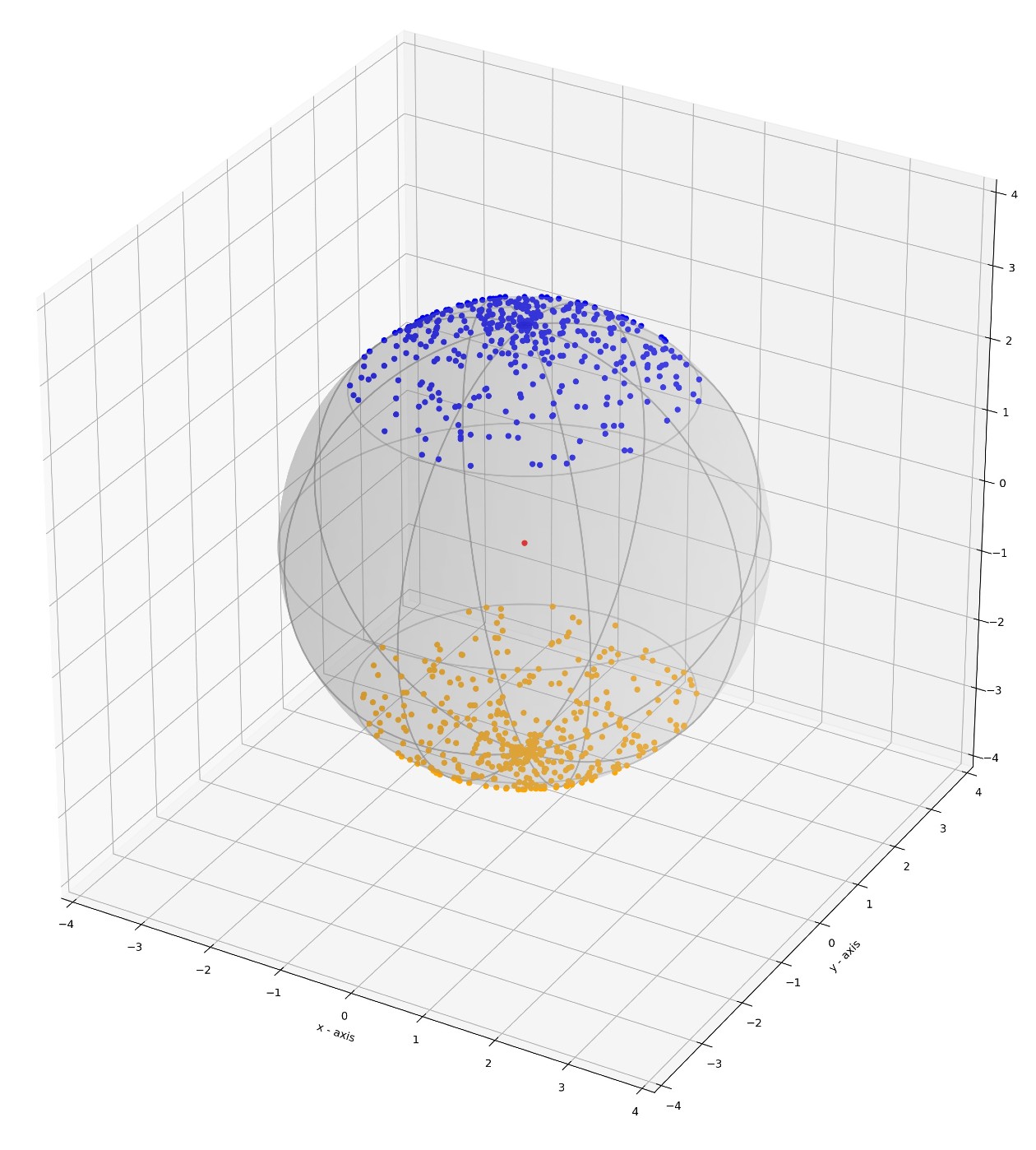}
    \caption{}
  \end{subfigure}
  \begin{subfigure}{.24\textwidth}
    \centering
    \includegraphics[width=1\linewidth]{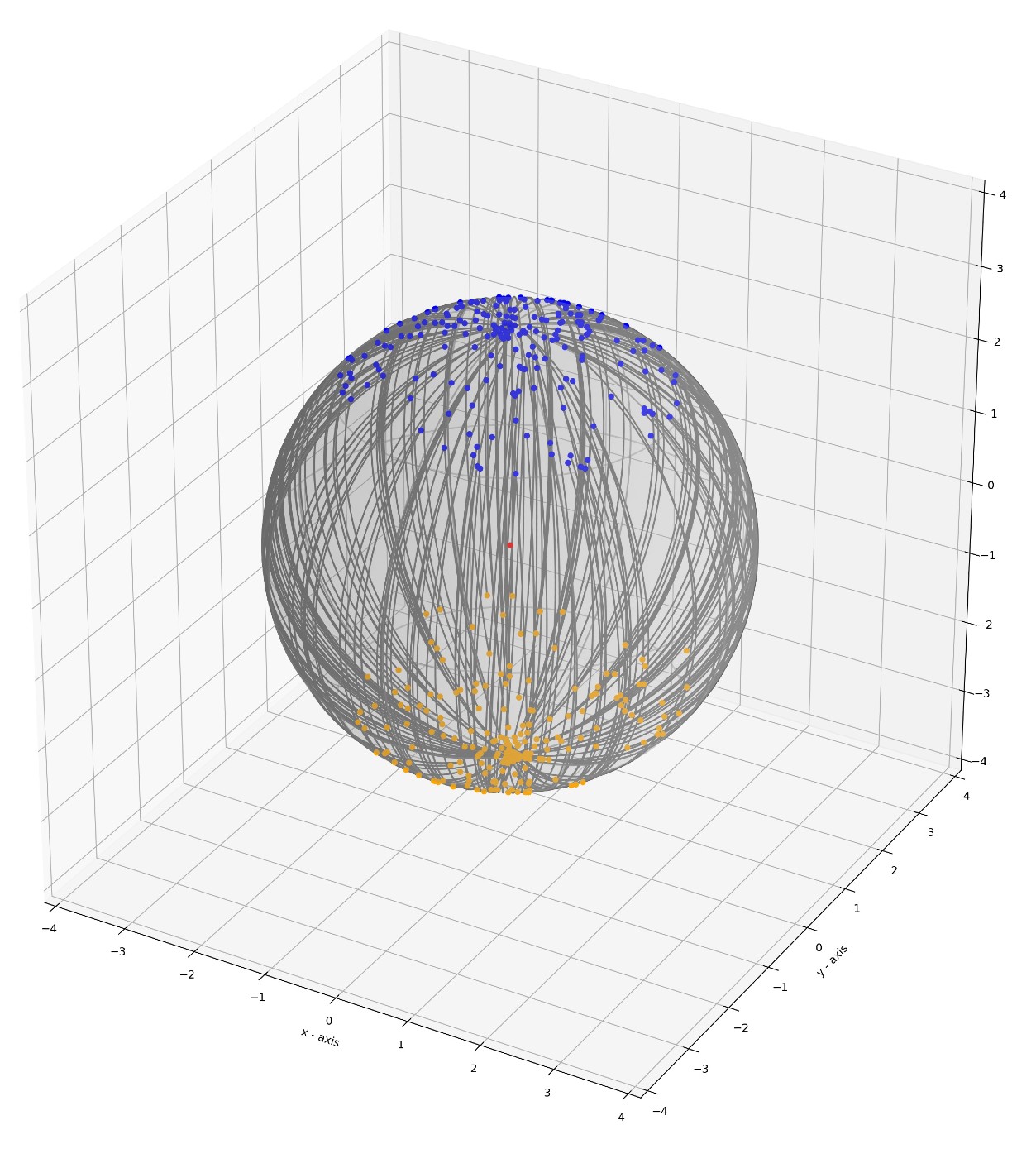}
    \caption{}
  \end{subfigure}
  \begin{subfigure}{.24\textwidth}
    \centering
    \includegraphics[width=1\linewidth]{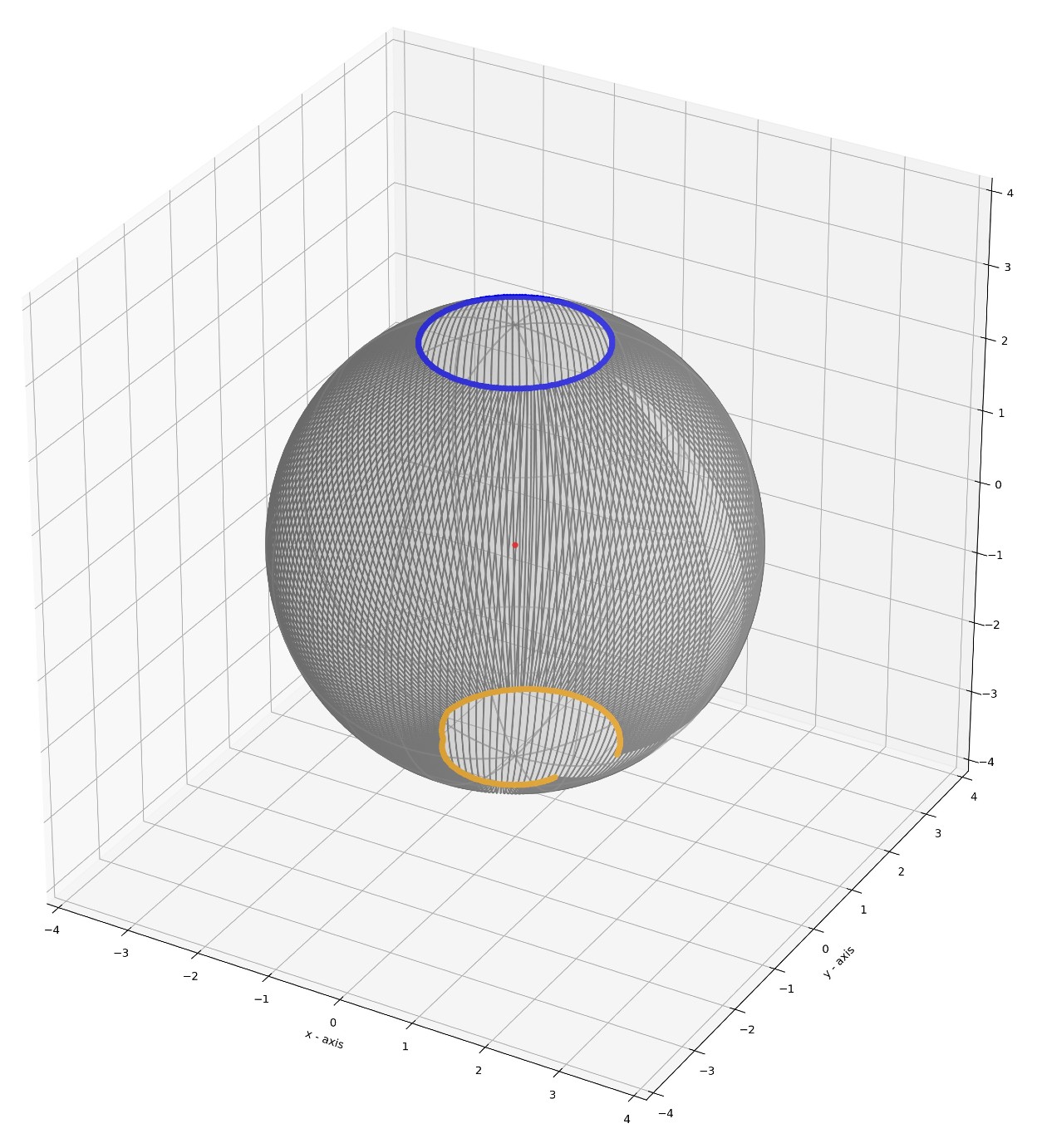}
    \caption{}
  \end{subfigure}
  \begin{subfigure}{.24\textwidth}
    \centering
    \includegraphics[width=1\linewidth]{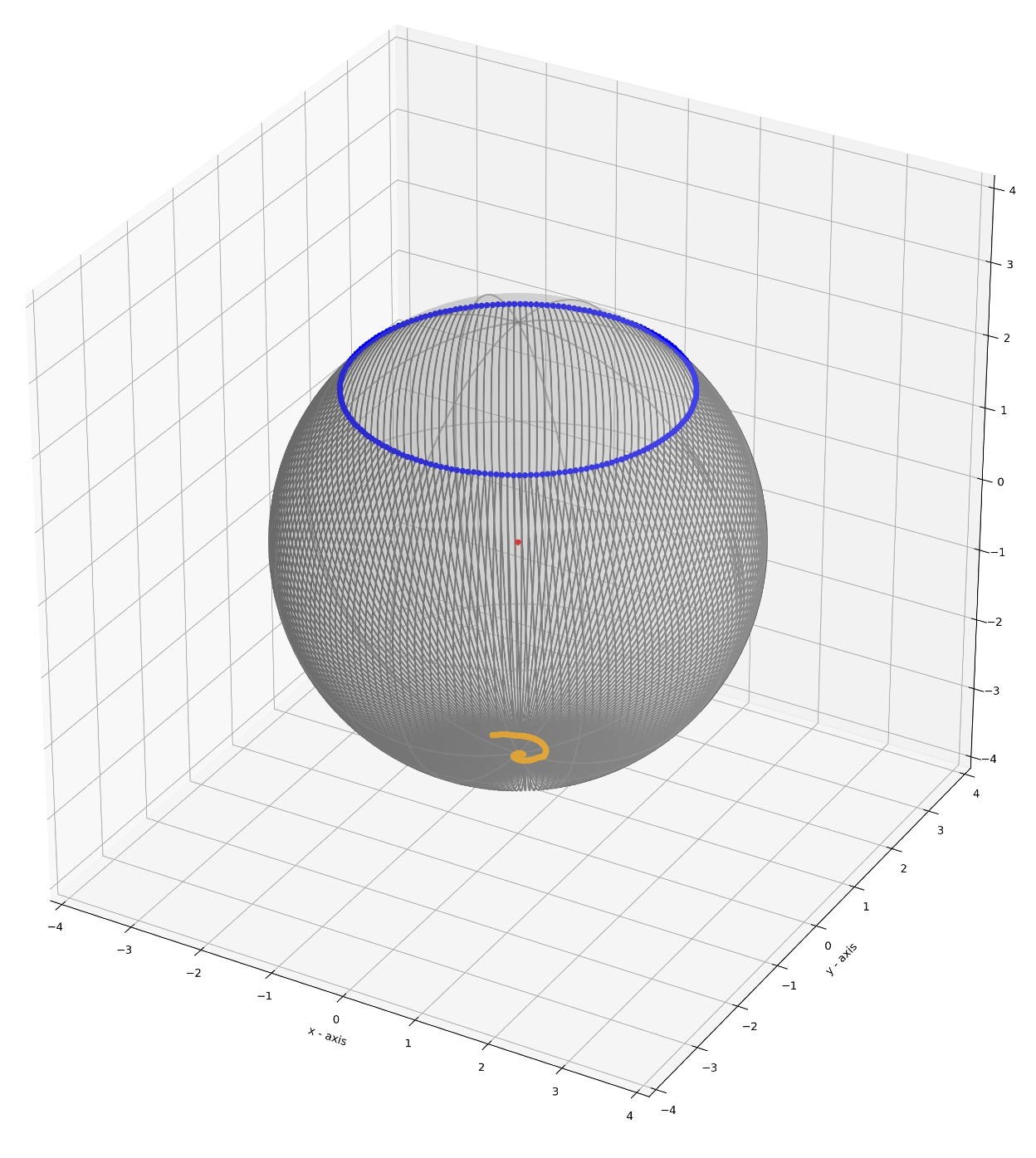}
    \caption{}
  \end{subfigure}
  \caption{Monge map from $\rho_a$ to $\rho_b$ on the sphere: (a) blue samples from $\rho_a$ (corresponds to $\hat{\rho}_a$) and orange samples from $\rho_b$ (corresponds to $\hat{\rho}_b$); (b) blue samples from $\rho_a$, orange samples are obtained from $\hat{T}_\sharp\hat{\rho}_b$, grey curves are geodesics connecting each transporting pairs; (c) our computed Monge map maps blue ring ($\phi=\frac{\pi}{8}$) to the orange curve (ground truth is $\phi=\frac{7}{8}  \pi $); (d) our computed Monge map maps blue ring ($\phi = \frac{\pi}{4}$) to the orange curve (ground truth is the southpole)}
  \vspace{-0.5cm}
\end{figure}

\vspace{-0.3cm}
\paragraph{Population transportation} We have described this example in Section \ref{population transportation}. Here we make further explanation on the map-to-land transform $\tau$, it is defined as follows.
\begin{equation*}
  \tau(\theta,\phi) = \begin{cases}
    (\theta,\phi), \quad \textrm{if}~ (\theta,\phi)\in\textrm{Land}; \\
    \underset{(\tilde{\theta},\tilde{\phi})\in P}{\textrm{argmin}}\left\{\|(\tilde{\theta},\tilde{\phi}) - (\theta, \phi)\|_2\right\}, \quad \textrm{if}~(\theta,\phi)\in\textrm{Sea}.
  \end{cases}
\end{equation*}
Here we choose $P$ as a finite set consists of $2000$ samples randomly selected from $\rho_b^{\textrm{Sph}}$.

We further plot in Figure \ref{fig: more sample plot} with sufficiently large amount of source samples and pushforwarded samples. Among the 40000 pushforwarded samples, 7718 are located on the sea, and we apply $\tau$ to map these samples back to a rather close location on land. It is worth mentioning that the map used as the background in our figures has several small regions removed from the actual land. This is due to the lack of data points in the dataset provided in \cite{geodata}.
\begin{figure}
  \centering
  \includegraphics[width=1\textwidth]{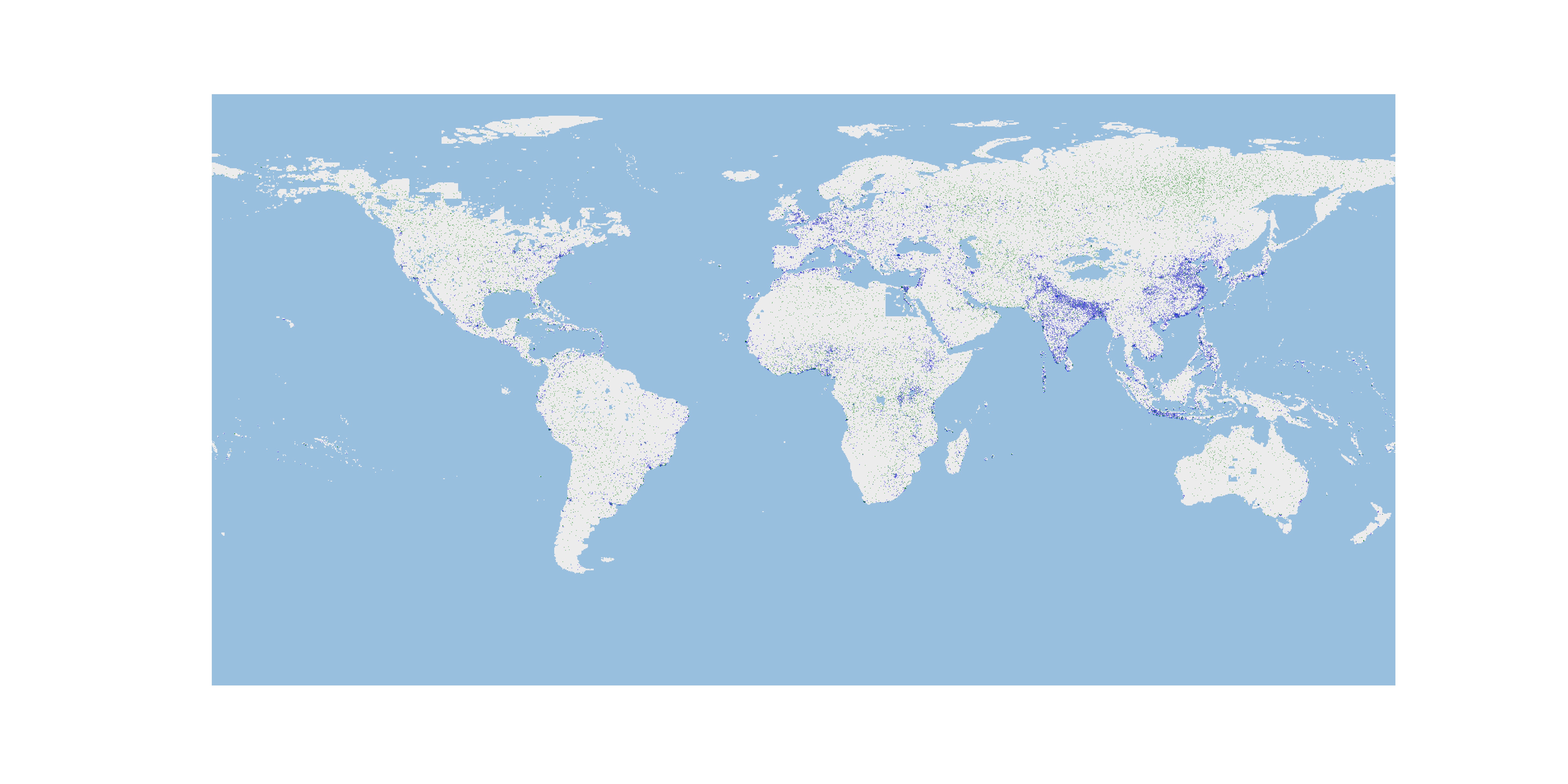}
  \vspace{-1.5cm}
  \caption{In this figure, we plot $N=40000$ samples $\{\theta_k,\phi_k\}_{k=1}^{N}$ (blue) randomly drawn from $\rho_a^{\textrm{Sph}}$, and their pushforwarded samples $\{T_{\theta}(\theta_k, \phi_k)\}_{k=1}^{N}$ (green). }
  \label{fig: more sample plot}
\end{figure}

\begin{figure*}[ht!]
  \centering
  \begin{subfigure}{1.0\textwidth}
    \centering
    \includegraphics[width=1\linewidth]{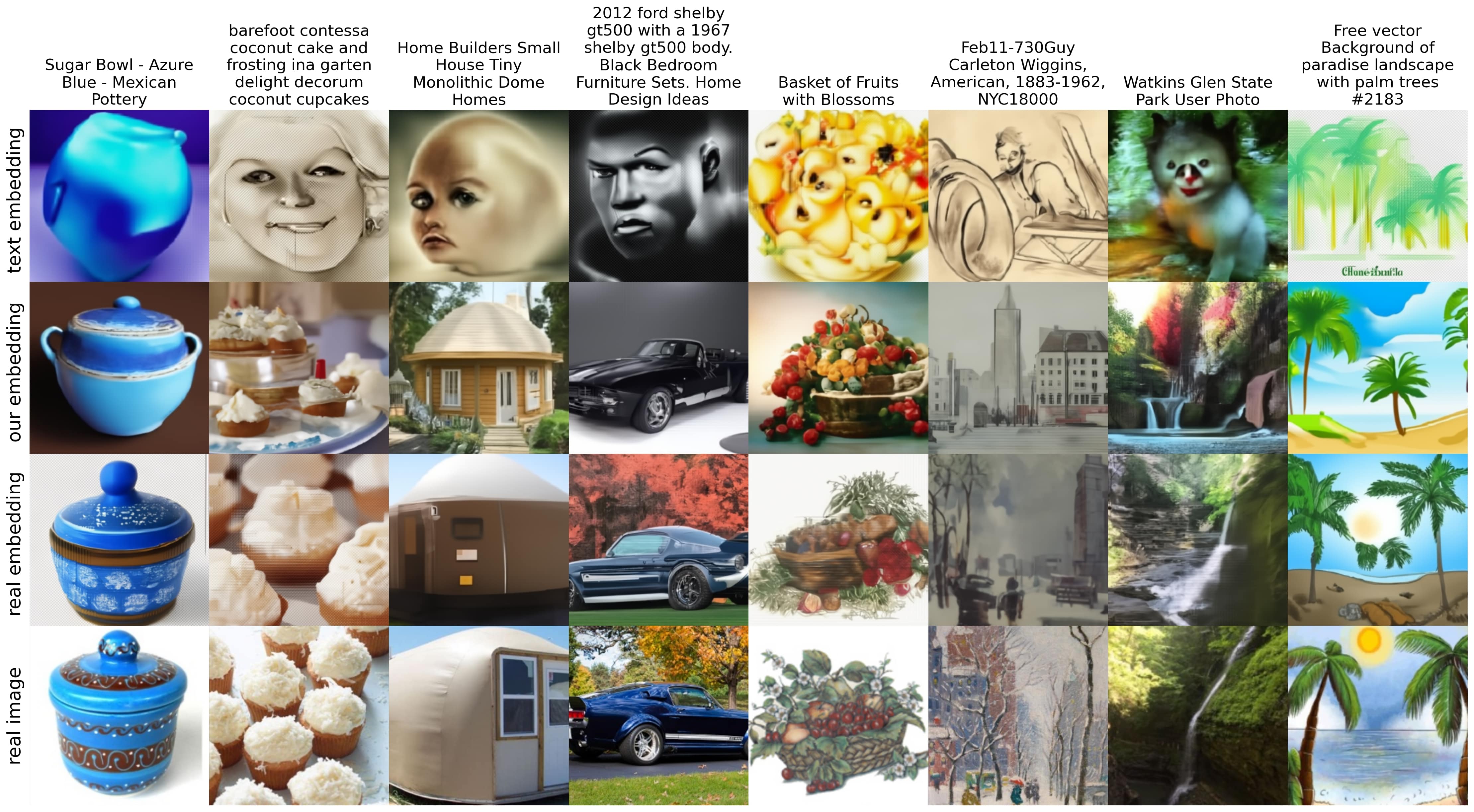}
    \caption{Random image samples on Laion art prompts}
  \end{subfigure}
  \begin{subfigure}{1.0\textwidth}
    \centering
    \includegraphics[width=1\linewidth]{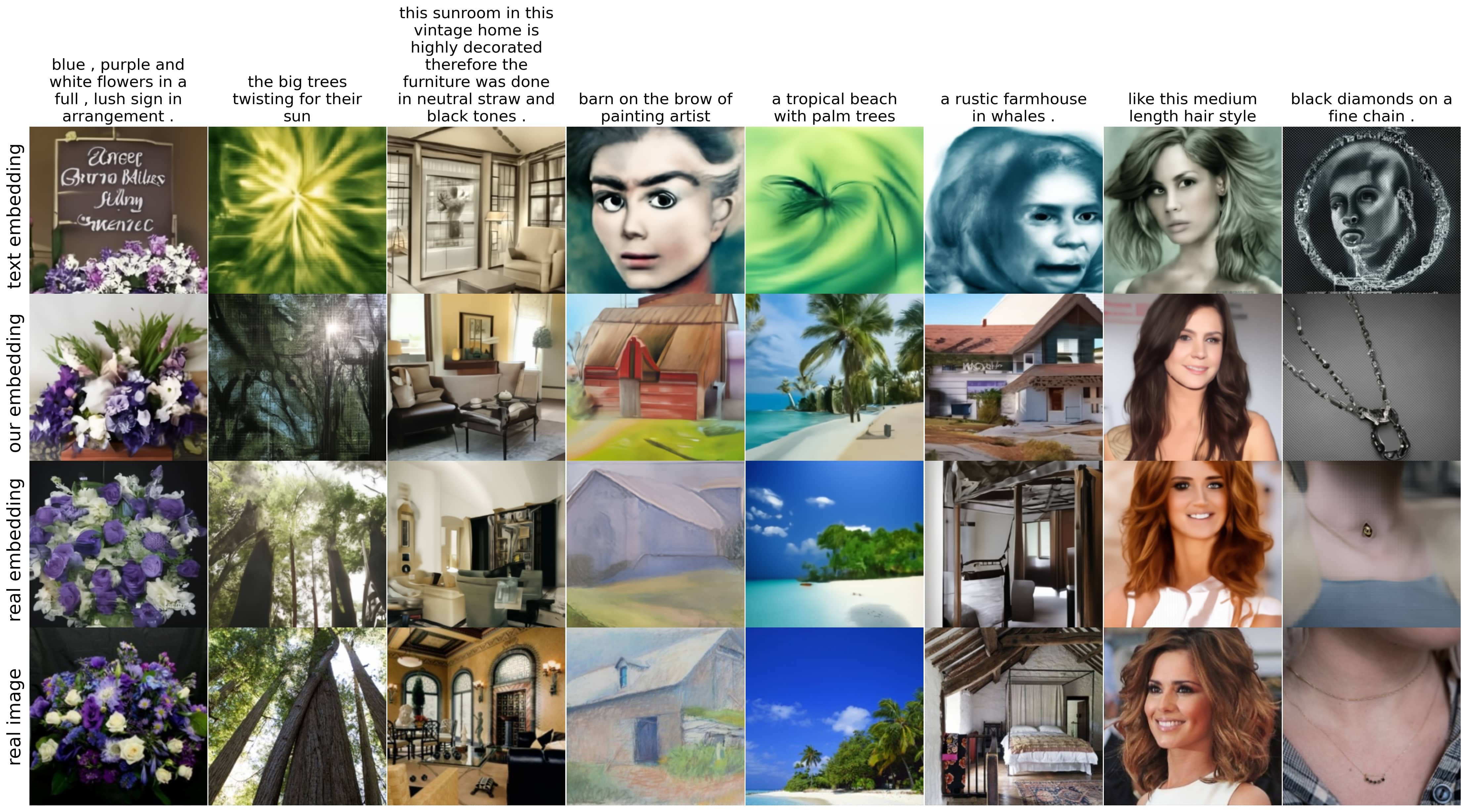}
    \caption{Random image samples on Conceptual Captions 3M prompts}
  \end{subfigure}
  \caption{Additional text to image generation results}
  \label{fig:additional_txt2img}
\end{figure*}

\begin{figure*}[ht!]
  \centering
  \begin{subfigure}{0.3\textwidth}
    \centering
    \includegraphics[width=1\linewidth]{images/degraded_images_d64.png}
    \includegraphics[width=1\linewidth]{images/real_img_d64.png}
    \caption{Degraded and original images}
  \end{subfigure}
  \begin{subfigure}{0.3\textwidth}
    \centering
    \includegraphics[width=1\linewidth]{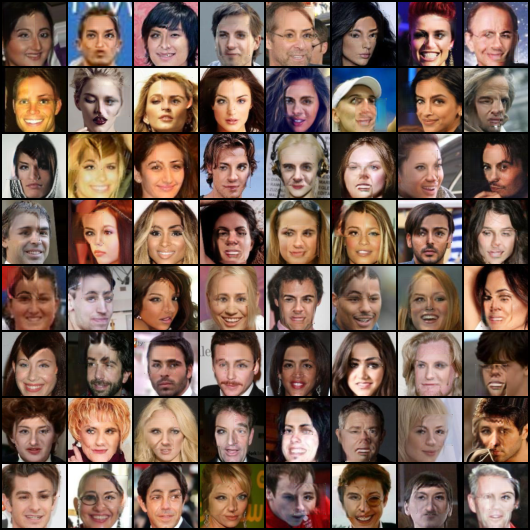}
    \includegraphics[width=1\linewidth]{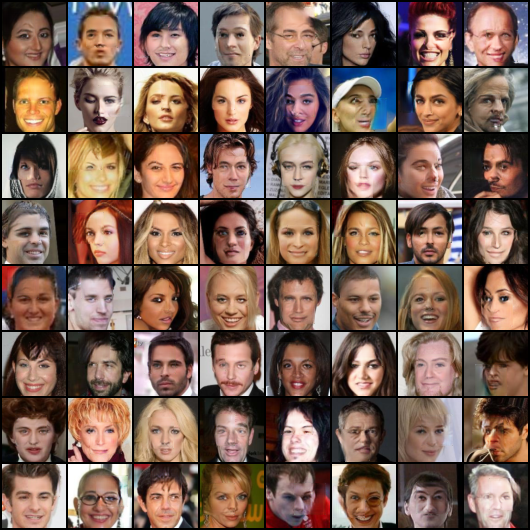}
    \caption{Composite images $G(x)$}
  \end{subfigure}
  \begin{subfigure}{0.3\textwidth}
    \centering
    \includegraphics[width=1\linewidth]{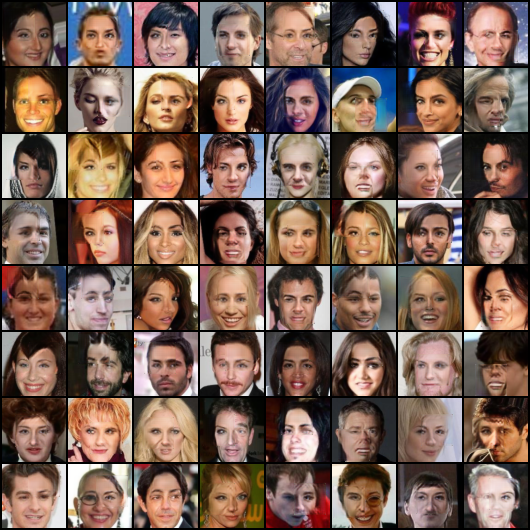}
    \includegraphics[width=1\linewidth]{images/pushed_images_d64_c1e4.png}
    \caption{Pushforward images $T(x)$ }
  \end{subfigure}
  \caption{Unpaired image inpainting on \textbf{test} dataset of CelebA $64\times 64$.
    We take the composite image
    $G(x)= T(x) \odot M^C + x \odot M$
    as the output image. Additionally, we provide the pushforward images $T(x)$ to illustrate the regularization effect of transportation cost.
    In panel (b) and (c), we show the results with $\alpha=10$ in the first row and $\alpha =10000 $ in the second row.
    A small transportation cost would result that pushforward map neglects the connection to the unmasked area, which is illustrated by a more clear mask border in pushforward images.
  }
  \label{fig:celeba64_addition}
\end{figure*}

\subsection{Real-world dataset}

We show additional text to image generation results by our algorithm in Figure \ref{fig:additional_txt2img}, and additional CelebA 64$\times$64 inpainting results in Figure \ref{fig:celeba64_addition}.
Moreoever, we evaluate our algorithm in the following scenario.

\paragraph{Class-preserving map}
We consider the class-preserving map between two labelled datasets.
The input distribution $\rho_a = \sum_{j=1}^J a_j \rho^j_a $ is a mixture of  $J$ distinct distributions $\{\rho^j_a \}_{j=1}^J$.
Similarly, the target distribution is $\rho_b = \sum_{j=1}^J a_j \rho^j_b $. Each distribution $\rho_a^j$ (or $\rho_b^j$) is associated with a known label/class $j$.
We further assume that the support of $\{\rho_a^j\}$ (or $\rho_b^j$) are disjoint.
We seek a map that solves the  problem
\begin{equation}\label{eq:cond_monge}
  \min_{\substack{T_\sharp \rho^j_a = \rho^j_b}} \int_{\mathbb{R}^n}
  \|x -T(x) \|^2
  \rho_a(x)~dx,
\end{equation}
where the constraint asks the map to preserve the original class. To approximate this map, we replace the constraint by a contrastive penalty. Indeed, this trick transfers \eqref{eq:cond_monge} to the original \hyperref[Monge problem]{Monge problem} with a cost
\begin{align*}
  c(\{x,y\}, \{x',y'\} ) = \|x - x'\|^2 +  \mathbf{1}( y \neq y' ),
\end{align*}
where $y$ and $y'$ are the labels corresponding to $x$ and $x'$,
$\mathbf{1}$ is the indicator function.
To better guide the mapping, we involve the label as an input to the mapping $T$ and the potential $f$. Denote
$\ell(\cdot): \R^m \rightarrow \R^J $ as a pretrained classifier on the target domain $\rho_b$. Given a feature from the target domain, $\ell(\cdot)$ will output a probability vector.  Then,
our full formula reads
\begin{align*}
  \sup_f \inf_T & \int \left[ c(x, \bar x) -  y^\top
    \bar y - f(\bar x; \bar y) )\right] d\rho_a + \int f(x'; y')d \rho_b,               \\
                & \text{where } \quad  \bar x = T(x; y) , \quad \bar y = \ell(T(x; y)).
\end{align*}
In practice, $y \in \R^J$ is a one-hot label and it is known because both the source and target datasets are labelled.
Our method is different from \citet{asadulaev2022neural} because they only solve a map such that $T_\sharp \rho^j_a = \rho^j_b$, and ignore the transport cost. And their map does not involve the label as input.
The closest work to us is the covariate-guided conditional Monge map in \citet{bunne2022supervised}. They use PICNN~\citep{icnn} to approximate the conditional Monge map and the potential.

We compare our algorithm with \citet{asadulaev2022neural} on $^*$NIST~\citep{LeCun2005TheMD,xiao2017fashion,clanuwat2018deep} datasets. The visualization of our mapping is presented in Figure \ref{fig:nist}. We also calculate FID between the mapped source test dataset and the target test dataset. To quantify how well the map preserves the original class, we use a pretrained SpinalNet classifier~\citep{kabir2022spinalnet} on the target domain and evaluate the accuracy of the predicted label.
If the predicted label of $T(x;y)$ equals to the original label $y$, then the mapping is correct, otherwise wrong. We show the quantitative result in Table \ref{tab:class}, where the results of \citet{asadulaev2022neural} are from their paper. Our accuracy result is nearly $100\%$ thanks to the contrastive loss guidance. Even though our FID is slightly inferior, we conjecture this can be improved by using a stochastic map~\citep{korotin2022neural} as well.

\begin{figure*}[ht!]
  \centering
  \begin{subfigure}{0.4\textwidth}
    \centering
    \includegraphics[width=1\linewidth]{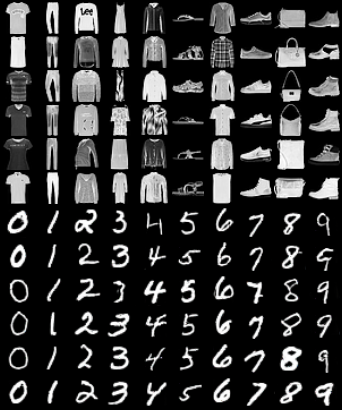}
    \caption{FMNIST $\rightarrow$ MNIST}
  \end{subfigure}
  \hspace{0.3cm}
  \begin{subfigure}{0.4\textwidth}
    \centering
    \includegraphics[width=1\linewidth]{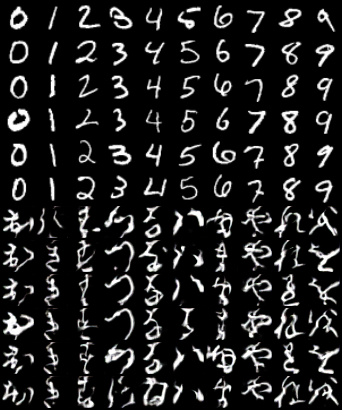}
    \caption{MNIST $\rightarrow$ KMNIST}
  \end{subfigure}
  \caption{Class-preserving mapping. The top block is the source images, and the bottom block is the pushforward images. Each column represents a class.}
  \label{fig:nist}
\end{figure*}

\begin{table}[]
  \caption{Accuracy of the maps and FID of generated samples.}
  \label{tab:class}
  \begin{center}
    \begin{tabular}{|c|c|c|c|c|}
      \hline
                                &                            & \begin{tabular}[c]{@{}c@{}}\citet{asadulaev2022neural}\\ deterministic map \end{tabular} & \begin{tabular}[c]{@{}c@{}}\citet{asadulaev2022neural} \\ stochastic map \end{tabular} & Ours       \\ \hline
      \multirow{2}{*}{FID}      & MNIST $\rightarrow$ KMNSIT & 17.26                       & \bf{9.69}                   & 18.07      \\ \cline{2-5}
                                & FMNIST $\rightarrow$ MNIST & 7.15                        & \bf{5.26}                   & 5.78       \\ \hline
      \multirow{2}{*}{Accuracy} & MNIST $\rightarrow$ KMNSIT & 79.20                       & 61.91                       & \bf{99.59} \\ \cline{2-5}
                                & FMNIST $\rightarrow$ MNIST & 82.97                       & 83.22                       & \bf{99.73} \\ \hline
    \end{tabular}
  \end{center}
\end{table}

\section{Implementation details and hyper-parameters}\label{hyperp}

\subsection{Synthetic datasets}

\paragraph{Unequal dimensions}
The networks $T_\theta$ and $f_\eta$ each has 5 layers with 10 hidden neurons. The batch size $B=100$. $K_1=6,K_2=1$. The learning rate is $10^{-3}$. The number of iterations $K=12000.$

\paragraph{Decreasing cost function}
In this example, we set $T_\theta(x) = x+F_\theta(x)$ and optimize over $\theta$. For either $\frac{1}{|x-y|^2}$ or $|x-y|^2$ case we set both  $F_\theta$ and the Lagrange multiplier $f_\eta$ as six layers fully connected neural networks, with PReLU and Tanh activation functions respectively, each layer has 36 nodes. The training batch size $B = 2000 $. We set $K=2000$, $K_1 = 8, K_2=6$.

\paragraph{On sphere}

In this example, we set $T_\theta(x) = x+F_\theta(x)$ and optimize over $\theta$. We set both  $F_\theta$ and $f_\eta$ as six layers MLP, with PReLU activation functions, each layer has 8 nodes The training batch size is $ 200 $. We set $K=4000$, $K_1 = 8, K_2= 4 $. We choose rather small learning rate in this example to avoid gradient blow up, we set $0.5\times 10^{-5}$ as the learning rate for $\theta$ and $10^{-5}$ as the learning rate for $\eta$.

\paragraph{Population transportation} In this example, we choose both $T_\theta, f_\eta$ as ResNets with depth equals $4$ and hidden dimensions equals $32$; we choose the activation function as PReLU. We apply dropout technique \cite{hintondropout} with $p=0.24$ to each layer of our networks. We follow the algorithm presented in Algorithm \ref{alg:1} to train the map $T_\theta$, We optimize with Adam method with learning rate $5\times 10^{-5}$. $T_\theta$ is computed after $200000$ steps of optimization. For the comparison experiment, we tested our example by slightly modifying the codes presented in  \href{https://pythonot.github.io/auto_examples/domain-adaptation/plot_otda_mapping.html}{OT mapping estimation for domain adaptation} of POT library \cite{flamary2021pot}. We train the linear transformation on $2000$ samples from both $\rho_a^{\textrm{Sph}}$ and $\rho_b^{\textrm{Sph}}$, we choose the hyper parameter $\mu=1, \epsilon = 10^{-4}$. We plot the second Figure \ref{spherical transport} by applying the map-to-land transform $\tau$ with $2000$ newly selected samples. We also tried the same example with Gaussian kernel, but the results are always unstable for various hyper parameters. Generally speaking, it is very hard to obtain valid results with Gaussian kernel.

\subsection{Text to image generation}

\paragraph{Dataset details}

\href{https://github.com/LAION-AI/laion-datasets/blob/main/laion-aesthetic.md}{Laion aesthetic} dataset is filtered from a Laion 5B dataset to have the high aesthetic level.
Laion art is the subset of Laion aesthetic and contains 8M most aethetic samples. We download the metadata of Laion art according to the
\href{https://github.com/rom1504/img2dataset/blob/main/dataset_examples/laion-art.md}{instructions}
of Laion. We then filter only English prompts, and download the images with \href{https://github.com/rom1504/img2dataset/blob/main/dataset_examples/laion5B.md}{img2dataset}. To speed up the training, we run the \href{https://github.com/rom1504/clip-retrieval}{CLIP retrieval} to convert images to embeddings and use \href{https://github.com/Veldrovive/embedding-dataset-reordering}{
  embedding-dataset-reordering} to reorder the embeddings into the expected format. By doing these two steps, we save the time of calculating embeddings on the fly. After filtering English prompts, we get the Laion art dataset with 2.2M data.

We download CC-3M following this \href{https://github.com/rom1504/img2dataset/blob/main/dataset_examples/cc3m.md}{instruction}. We then remove all the images with watermark, which include images downloaded from \textit{shutterstock, alamy, gettyimages,} and \textit{dailymail.co.uk} websites. After removing watermark images, the CC-3M only has 0.8M (text, image) pairs left.

For each dataset, we let the source and the target distribution contain
0.3M data respectively, and take the rest of dataset as the test data.

\paragraph{Network structure and hyper-parameters} We use the \href{https://github.com/lucidrains/DALLE2-pytorch}{DALLE$\cdot$2 prior network} to represent the map $T$. The potential $f$ is using the same network with an additional pooling layer. We use \href{https://github.com/fadel/pytorch_ema}{EMA} to stablize the training of the map.  The batch size is 225. The number of loop iterations are $K_1=10$, $K_2=1$. We use the learrning rates $10^{-4}$, Adam \citep{kingma2014adam} optimizer with weight decay coefficient $0.0602$. We train the networks for 110 epochs.

On NVIDIA RTX A6000 (48GB), the training time of  each experiment is 21 hours.

\subsection{Unpaired inpainting}

Recall the composite image is
$G(x)= T(x) \odot M^C + x \odot M$, where $M $ is the inpainting mask.
The loss function is slightly different with the \eqref{eq:L def}. We modify the $f(T(x))$ to be $f(G(x))$ to strengthen the training of $f$
\begin{align*}
  \sup_f \inf_T \int_{\mathbb{R}^n} \left[ c(x,T(x)) - f(G(x))\right]\rho_a(x)~dx + \int_{\mathbb{R}^m} f(y)\rho_b(y)~dy.
\end{align*}

In the unpaired inpainting experiments,  the images are first cropped at the center with size 140 and then resized to $64 \times 64 $ or $128 \times 128 $. We choose learning rate to be $1 \cdot 10^{-3}$, Adam optimizer with default beta parameters,
$K_2=1$. The batch size is 64 for CelebA64 and 16 for CelebA128. The number of inner loop iteration $K_1=5$ for CelebA64 and $K_1=10$ for CelebA128.

We use exactly the same UNet for the map $T$ and convolutional neural network for $f$ as \citet[Table 9]{rout2022generative} for CelebA64 and add one additional convolutional block in $f$ network for CelebA128.
On NVIDIA RTX A6000 (48GB), the training time of  CelebA64 experiment is 10 hours and the time of CelebA128 is 45 hours.

We use the \href{https://pythonot.github.io/auto_examples/domain-adaptation/plot_otda_mapping.html}{POT implementation} of \citet{perrot2016mapping} as the template and implement the masked MSE loss by ourself. \citet{perrot2016mapping} provides two options as the transformation map, one is linear and another is the kernel function. We find the kernel function on this example would have generate mode collapse results, so we adopt the linear transformation map. We choose $L_2 $ regularization coefficient to be $10^{-3}$, and other parameters are the same as default. \textbf{The result of discrete OT in Figure \ref{fig:celeba64} in the main paper was blue due to an incorrect channel order. We have corrected it in the supplementary material.}

\subsection{Class preserving mapping}

The NIST images are rescaled to size $32\times 32$ and repeated to 3 channels.

We use the \href{https://github.com/kgkgzrtk/cUNet-Pytorch}{conditional UNet} to represent map $T$. We add a projection module~\citep{miyato2018cgans} on WGAN-QC's ResNet~\citep{liu2019wasserstein} to represent potential $f$.
We choose learning rate to be $1 \cdot 10^{-4}$, Adam optimizer with betas $(0.5,0.999)$.
$K_1=10, K_2=1$.
We use \href{https://github.com/fadel/pytorch_ema}{EMA} to stablize the training of the map.
The batch size is 64. In practice, we introduce a coefficient $\lambda$ in the cost $ c(\{x,y\}, \{x',y'\} ) = \|x - x'\|^2 +  \lambda \mathbf{1}( y \neq y' )$
and set $\lambda =0.5$.

\end{document}